\documentclass[conference]{IEEEtran}
\usepackage{times}

\usepackage{multicol}
\usepackage[bookmarks=true,colorlinks=true]{hyperref}

\usepackage{comment}
\usepackage{siunitx}
\usepackage{relsize}
\usepackage{ifthen}
\usepackage[colorinlistoftodos]{todonotes}

\usepackage[caption=false]{subfig}

\usepackage[vlined,ruled,linesnumbered]{algorithm2e}
\usepackage{graphics} % for pdf, bitmapped graphics files
\usepackage{rotating}
\usepackage{color}
\usepackage{enumerate}
\usepackage[T1]{fontenc}
\usepackage{psfrag}
\usepackage{epsfig} % for postscript graphics files
\usepackage{booktabs}
\usepackage{graphicx,url}
\usepackage{multirow}
\usepackage{array}
\usepackage{latexsym}
\usepackage{amsfonts}
\usepackage{amsmath}
\usepackage{amssymb}
\usepackage{mathtools}
\usepackage{xstring}
\usepackage[noend]{algorithmic}
\usepackage{multirow}
\usepackage{xcolor}
\usepackage{prettyref}
\usepackage{flexisym}
\usepackage{bigdelim}
\usepackage{breqn} % load this last
\usepackage{listings}
\let\labelindent\relax
\usepackage{enumitem}
\usepackage{xspace}
\usepackage{bm}
\graphicspath{{./figures/}}
\usepackage{tikz}
\usetikzlibrary{matrix,calc}

\usepackage{mdwlist}

\let\itemize\undefined
\makecompactlist{itemize}{stditemize}

\usepackage{amsthm}
\newrefformat{prob}{Problem\,\ref{#1}}
\newrefformat{def}{Definition\,\ref{#1}}
\newrefformat{sec}{Section\,\ref{#1}}
\newrefformat{sub}{Section\,\ref{#1}}
\newrefformat{prop}{Proposition\,\ref{#1}}
\newrefformat{app}{Appendix\,\ref{#1}}
\newrefformat{alg}{Algorithm\,\ref{#1}}
\newrefformat{cor}{Corollary\,\ref{#1}}
\newrefformat{thm}{Theorem\,\ref{#1}}
\newrefformat{lem}{Lemma\,\ref{#1}}
\newrefformat{fig}{Fig.\,\ref{#1}}
\newrefformat{tab}{Table\,\ref{#1}}

\newtheorem{theorem}{Theorem}

\newtheorem{remark}[theorem]{Remark}
\newtheorem{example}{Example}

\newcommand{\cf}{\emph{cf.}\xspace}

\newcommand{\bdmath}{\begin{dmath}}
\newcommand{\edmath}{\end{dmath}}
\newcommand{\beq}{\begin{equation}}
\newcommand{\eeq}{\end{equation}}
\newcommand{\bdm}{\begin{displaymath}}
\newcommand{\edm}{\end{displaymath}}
\newcommand{\bea}{\begin{eqnarray}}
\newcommand{\eea}{\end{eqnarray}}
\newcommand{\beal}{\beq \begin{array}{ll}}
\newcommand{\eeal}{\end{array} \eeq}
\newcommand{\beas}{\begin{eqnarray*}}
\newcommand{\eeas}{\end{eqnarray*}}
\newcommand{\ba}{\begin{array}}
\newcommand{\ea}{\end{array}}
\newcommand{\bit}{\begin{itemize}}
\newcommand{\eit}{\end{itemize}}
\newcommand{\ben}{\begin{enumerate}}
\newcommand{\een}{\end{enumerate}}

\newcommand{\calT}{{\cal T}}

\newcommand{\eg}{\emph{e.g.,}\xspace}
\newcommand{\ie}{\emph{i.e.,}\xspace}

\newcommand{\hide}[1]{}
\newcommand{\wrt}{w.r.t.\xspace}

\newcommand{\hiddenText}{{\color{gray} hidden text.}}
\newcommand{\hideWithText}[1]{\hiddenText}

\newcommand{\Real}[1]{ { {\mathbb R}^{#1} } }

\newcommand{\scenario}[1]{{\smaller \sf#1}\xspace}

\newcommand{\blue}[1]{{\color{blue}#1}}

\newcommand{\linkToPdf}[1]{\href{#1}{\blue{(pdf)}}}
\newcommand{\linkToPpt}[1]{\href{#1}{\blue{(ppt)}}}
\newcommand{\linkToCode}[1]{\href{#1}{\blue{(code)}}}
\newcommand{\linkToWeb}[1]{\href{#1}{\blue{(web)}}}
\newcommand{\linkToVideo}[1]{\href{#1}{\blue{(video)}}}
\newcommand{\linkToMedia}[1]{\href{#1}{\blue{(media)}}}
\newcommand{\award}[1]{\xspace} % {{\red{#1}}} % omit awards

\usepackage{adjustbox}

\usepackage[backend=biber,
            style=numeric,
            maxnames=99,   % bibliography
            maxcitenames=3 % citations
           ]{biblatex}
\AtEveryBibitem{\clearfield{note}}
\AtEveryBibitem{\clearfield{doi}}
\AtEveryBibitem{\clearfield{url}}
\newcommand{\KL}{D_\mathrm{KL}}
\newcommand{\supp}{{Supplementary Material}}
\newcommand{\jax}{\textsc{JAX}}
\newcommand{\blackjax}{\textsc{BlackJax}}
\newcommand{\tsmc}{\scenario{TSMC}}
\newcommand{\pnuts}{\scenario{Parallel-NUTS}}
\newcommand{\pmala}{\scenario{Parallel-MALA}}
\newcommand{\pmppi}{\scenario{Parallel-MPPI}}
\newcommand{\pipopt}{\scenario{Parallel-IPOPT}}
\newcommand{\casadi}{\textsc{CasADi}}
\newcommand{\ppo}{\scenario{PPO}}
\newcommand{\sac}{\scenario{SAC}}
\newcommand{\position}{\text{pos}}
\newcommand{\velocity}{\text{vel}}
\newcommand{\distance}{\text{dist}}
\newcommand{\tsmcd}{\scenario{TSMC:D}}
\newcommand{\tsmce}{\scenario{TSMC:E}}
\newcommand{\warn}[1]{{#1}}

\begin{document}

% paper title
% IEEEtran uses a large default title font; override it.
% 19pt is between \LARGE and \Huge (10pt base); 23pt is the baselineskip.
% \title{{\fontsize{23}{23}\selectfont Tempered Sequential Monte Carlo for Trajectory and Policy Optimization with Differentiable Dynamics}\vspace{-2mm}}

\title{{\fontsize{23}{23}\selectfont Tempered Sequential Monte Carlo for Trajectory and Policy Optimization with Differentiable Dynamics\vspace{-2mm}}}

% You will get a Paper-ID when submitting a pdf file to the conference system
% \author{Author Names Omitted for Anonymous Review. Paper-ID 405}
\author{Heng Yang \\ Harvard University \\ \texttt{https://github.com/ComputationalRobotics/TSMC} \vspace{-4mm}}

% \author{\authorblockN{Heng Yang}
% \authorblockA{Harvard University \vspace{-4mm}}}

%\and
%\authorblockN{Homer Simpson}
%\authorblockA{Twentieth Century Fox\\
%Springfield, USA\\
%Email: homer@thesimpsons.com}
%\and
%\authorblockN{James Kirk\\ and Montgomery Scott}
%\authorblockA{Starfleet Academy\\
%San Francisco, California 96678-2391\\
%Telephone: (800) 555--1212\\
%Fax: (888) 555--1212}}

% avoiding spaces at the end of the author lines is not a problem with
% conference papers because we don't use \thanks or \IEEEmembership

% for over three affiliations, or if they all won't fit within the width
% of the page, use this alternative format:
% 
%\author{\authorblockN{Michael Shell\authorrefmark{1},
%Homer Simpson\authorrefmark{2},
%James Kirk\authorrefmark{3}, 
%Montgomery Scott\authorrefmark{3} and
%Eldon Tyrell\authorrefmark{4}}
%\authorblockA{\authorrefmark{1}School of Electrical and Computer Engineering\\
%Georgia Institute of Technology,
%Atlanta, Georgia 30332--0250\\ Email: mshell@ece.gatech.edu}
%\authorblockA{\authorrefmark{2}Twentieth Century Fox, Springfield, USA\\
%Email: homer@thesimpsons.com}
%\authorblockA{\authorrefmark{3}Starfleet Academy, San Francisco, California 96678-2391\\
%Telephone: (800) 555--1212, Fax: (888) 555--1212}
%\authorblockA{\authorrefmark{4}Tyrell Inc., 123 Replicant Street, Los Angeles, California 90210--4321}}

\maketitle

%!TEX root = ../main.tex

\begin{abstract}
    We propose a sampling-based framework for finite-horizon trajectory and policy optimization under differentiable dynamics by casting controller design as inference. Specifically, we minimize a KL-regularized expected trajectory cost, which yields an optimal ``Boltzmann-tilted'' distribution over controller parameters that concentrates on low-cost solutions as temperature decreases. To sample efficiently from this sharp, potentially multimodal target, we introduce \emph{tempered sequential Monte Carlo} (TSMC): an annealing scheme that adaptively reweights and resamples particles along a tempering path from a prior to the target distribution, while using Hamiltonian Monte Carlo rejuvenation to maintain diversity and exploit \emph{exact gradients} obtained by differentiating through trajectory rollouts. For policy optimization, we extend TSMC via (i) a deterministic empirical approximation of the initial-state distribution and (ii) an extended-space construction that treats rollout randomness as auxiliary variables. Experiments across trajectory- and policy-optimization benchmarks show that TSMC is broadly applicable and compares favorably to state-of-the-art baselines.
\end{abstract}

\IEEEpeerreviewmaketitle

%!TEX root = ../main.tex

\section{Introduction}
\label{sec:introduction}

Consider a discrete-time dynamical system
\begin{equation}
    x_{t+1} = f(x_t, u_t), \quad t \in \mathbb{N}
\end{equation}
where \(x_t \in \Real{n}\) is the state and \(u_t \in \Real{m}\) is the control input. We focus on deterministic dynamics for clarity of exposition and discuss extensions to stochastic dynamics in Remark~\ref{remark:stochastic-dynamics}. For a finite horizon \(T\), we denote a state-control trajectory by $\tau = (x_0, u_0, x_1, u_1, \ldots, x_{T-1}, u_{T-1}, x_T)$, with total cost
\begin{equation}
    J(\tau) = \sum_{t=0}^{T-1} \ell_t(x_t, u_t) + \ell_T(x_T).
\end{equation}
Here \(\ell_t\) for \(t=0,\dots,T-1\) are stage costs and \(\ell_T\) is a terminal cost. Let \(\mu\) denote a distribution over initial states \(x_0\), and let \(\theta \in \Real{d}\) parameterize a (state-feedback) controller \(\pi_\theta:\Real{n}\to\Real{m}\). Under \(\pi_\theta\), the control input is \(u_t = \pi_\theta(x_t)\), which together with the dynamics induces a trajectory rollout \(\tau(x_0,\theta)\). For simplicity, we focus on deterministic controllers. We then consider the unified objective
\begin{equation}\label{eq:trajectory-policy-optimization}
\min_{\theta \in \Real{d}} \mathbb{E}_{x_0 \sim \mu} \left[ J(\tau(x_0, \theta)) \right].
\end{equation}
Problem \eqref{eq:trajectory-policy-optimization}, henceforth referred to as \emph{trajectory and policy optimization} (TPO), subsumes two fundamental settings in optimal control and reinforcement learning:
\begin{itemize}
    \item \textbf{Trajectory Optimization (TO)} \cite{rawlings20book-mpc,bryson69book-optimalcontrol,conway10book-spacecraft,kang24wafr-strom,posa14ijrr-direct}: fix the initial state by setting \(\mu=\delta_{x_0}\) (a Dirac measure), and choose an open-loop parameterization \(\theta \!=\! (u_0,\dots,u_{T-1})\) (equivalently, \(\pi_\theta(x_t)\equiv u_t\)). Then \eqref{eq:trajectory-policy-optimization} reduces to optimizing a single control sequence for a given \(x_0\). 
    
    % In practice, TO is often solved repeatedly in a receding-horizon fashion to yield an implicit state-feedback policy, \ie model predictive control \cite{rawlings20book-mpc,wensing23tro-optimization}.
    \item \textbf{Policy Optimization (PO)} \cite{sutton98book-rlbook,schulman15icml-trpo,schulman17arxiv-ppo,kakade01neurips-npg,haarnoja18icml-sac}: let \(\mu\) represent a distribution over initial conditions (\eg uniform over a region in the state space), and parameterize \(\pi_\theta\) as a state-feedback policy (\eg with a neural network). Then \eqref{eq:trajectory-policy-optimization} becomes the problem of minimizing expected trajectory cost over the initial-state distribution.
\end{itemize}

Motivated by the recent efforts in \emph{differentiable} physics simulation---particularly contact solvers \cite{le25arxiv-highly,hu20iclr-difftaichi,schwarke25corl-learning,freeman21arxiv-brax,paulus25arxiv-hard,newbury24ieee-review,howell22arxiv-dojo}---and in learning \emph{differentiable} world models \cite{qi25arxiv-gpc,li25arxiv-rwm,zhang25rss-particle,ai25sr-review,amigo25corl-forl,agarwal25arxiv-cosmos,wu23corl-daydreamer}, we assume the dynamics \(f(x_t,u_t)\) is differentiable in both \(x_t\) and \(u_t\). This enables gradient-based approaches that ``exploit'' differentiability: in TO, nonlinear programming techniques (single or multiple shooting) optimize an open-loop control sequence \cite{rawlings20book-mpc,andersson19mpc-casadi,verschueren22mpc-acados}; in PO, recent ``first-order'' methods \cite{clavera20iclr-model,xu22iclr-accelerated,amigo25corl-forl,amos21l4dc-model,wiedemann23icra-apg} leverage differentiability to compute policy gradients more efficiently and improve sample efficiency. A drawback, however, is that gradient-based optimization can get trapped in undesirable local minima, and gradients from differentiable simulators can be uninformative in the presence of nonsmooth dynamics \cite{suh22icml-differentiable}. In contrast, sampling-based methods such as \emph{model predictive path integral control} (MPPI) \cite{williams15arxiv-mppi,williams17jgcd-mppi} ``explore'' the control space and use the dynamics model only to evaluate trajectory costs; they can be less sensitive to poor local minima but often require many samples. This motivates our central question: \emph{can we design a unified and principled algorithm for TPO that combines the benefits of sampling and gradient-based optimization under differentiable dynamics?}

\textbf{From Optimization to Inference.} Towards this, instead of optimizing a single parameter \(\theta\) in \eqref{eq:trajectory-policy-optimization}, we consider optimizing a distribution \(p\) over \( \theta \in \Real{d}\). Let \(p_0\) be a prior distribution with \emph{full support} on \(\Real{d}\) (\ie with density \(p_0(\theta)>0\) for all \(\theta\in\Real{d}\); \eg a Gaussian). We study the KL-regularized problem
\begin{equation}\label{eq:regularized-trajectory-policy-optimization}
\min_{p} \mathbb{E}_{x_0 \sim \mu, \theta \sim p} \left[ J(\tau(x_0, \theta)) \right] + \lambda \KL(p \| p_0),
\end{equation}
where \(\KL(p \| p_0) := \int_{\Real{d}} p(\theta) \log \frac{p(\theta)}{p_0(\theta)} d\theta\) is the Kullback--Leibler (KL) divergence and \(\lambda>0\) controls the strength of regularization (equivalently, a ``temperature''). At $\lambda=0$, problem \eqref{eq:regularized-trajectory-policy-optimization} is equivalent to the TPO problem \eqref{eq:trajectory-policy-optimization}: they have the same optimal value and any distribution supported on the optimal solution set of \eqref{eq:trajectory-policy-optimization} is optimal for \eqref{eq:regularized-trajectory-policy-optimization}. As \(\lambda \to 0^+\), the optimal \(p^\star\) concentrates its mass on minimizers of \eqref{eq:trajectory-policy-optimization}. A key advantage of \eqref{eq:regularized-trajectory-policy-optimization} is that it admits a closed-form solution. 

\begin{theorem}[Optimal Distribution]\label{thm:optimal-distribution}
    Assume \(\ell_t(\cdot,\cdot)\ge 0\) for \(t=0,\dots,T-1\) and \(\ell_T(\cdot)\ge 0\). Define the energy function
    \begin{equation}\label{eq:energy-function}
        E(\theta) = \mathbb{E}_{x_0 \sim \mu} \left[ J(\tau(x_0, \theta)) \right],
    \end{equation}
    then the optimal distribution \(p^\star\) of \eqref{eq:regularized-trajectory-policy-optimization} is given by
    \begin{equation}\label{eq:optimal-distribution}
        p^\star(\theta) = \frac{1}{Z}\, p_0(\theta) \exp\left( - \frac{1}{\lambda} E(\theta) \right),
    \end{equation}
    where 
    \begin{equation}\label{eq:partition-function}
    Z = \int_{\Real{d}} p_0(\theta) \exp\left( - \frac{1}{\lambda} E(\theta) \right) d\theta
    \end{equation}
    is the normalizing constant (partition function).
\end{theorem}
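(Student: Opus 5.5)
The plan is to use the standard Gibbs variational principle: rewrite the objective of \eqref{eq:regularized-trajectory-policy-optimization} as a KL divergence to $p^\star$ plus a constant. Fubini (Tonelli) lets us swap the expectations over $x_0$ and $\theta$, which is legitimate because all costs are nonnegative. This turns the first term into $\int p(\theta) E(\theta)\, d\theta$, with $E$ as in \eqref{eq:energy-function}. The objective then reads
\begin{equation*}
\mathcal{F}(p) = \int_{\Real{d}} p(\theta)\Big( E(\theta) + \lambda \log \tfrac{p(\theta)}{p_0(\theta)} \Big) d\theta ,
\end{equation*}
to be minimized over probability densities $p$ on $\Real{d}$. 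Any $p$ not absolutely continuous with respect to $p_0$ has infinite KL and can be discarded.

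Next I check that $p^\star$ in \eqref{eq:optimal-distribution} is well defined. Since $E \ge 0$, we have $0 < \exp(-E/\lambda) \le 1$. Hence $Z \le \int p_0 = 1$, and $Z > 0$ because $p_0 > 0$ everywhere and the integrand is strictly positive. If $E = +\infty$ on a set, that set contributes zero to $Z$; I will either assume $E < \infty$ or note that $Z > 0$ still holds whenever $E$ is finite on a set of positive $p_0$-measure.

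The key identity is $\log\frac{p}{p_0} = \log\frac{p}{p^\star} - \frac{E}{\lambda} - \log Z$, which follows from $\log p^\star = \log p_0 - E/\lambda - \log Z$. Substituting into $\mathcal{F}$ gives
\begin{equation*}
\mathcal{F}(p) = \lambda \KL(p\,\|\,p^\star) - \lambda \log Z ,
\end{equation*}
because the $\int pE$ terms cancel. Since $\KL \ge 0$ by Gibbs' inequality (Jensen applied to the convex function $-\log$), with equality iff $p = p^\star$ almost everywhere, we conclude that $p^\star$ is the unique minimizer and the optimal value is $-\lambda\log Z$.

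The main obstacle is the integrability bookkeeping in the splitting step, not the algebra. When $\int pE = \infty$, or when $\KL(p\|p_0) = \infty$, separating the integrals can produce an $\infty - \infty$ expression. I would handle this as follows:
\begin{itemize}
\item First, restrict to $p$ with $\mathcal{F}(p) < \infty$. Both summands of $\mathcal{F}$ are bounded below: $\int pE \ge 0$ and $\KL(p\|p_0) \ge 0$. So finiteness of $\mathcal{F}$ forces each summand to be finite.
\item Within this class the rearrangement above is valid, since every term is finite.
\item For all other $p$, the value $\mathcal{F}(p) = \infty$ exceeds $\mathcal{F}(p^\star) = -\lambda\log Z < \infty$.
\item Finally, $p^\star$ itself belongs to the finite class. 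Indeed, $\KL(p^\star\|p_0) = -\tfrac{1}{\lambda}\int p^\star E - \log Z$, and $\int p^\star E$ is finite because the function $e \mapsto e\, e^{-e/\lambda}$ is bounded on $[0,\infty)$.
\end{itemize}
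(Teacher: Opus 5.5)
Your proof is correct, and it takes a different route from the paper's. The paper \emph{derives} the candidate by a calculus-of-variations argument. It forms a Lagrangian with a multiplier for $\int p = 1$ and takes the first variation along perturbations $\delta p$ with $\int \delta p = 0$. Setting $E + \lambda(\log p + 1 - \log p_0) + \alpha = 0$ a.e.\ and normalizing gives $C = 1/Z$. It then invokes strict convexity of $\mathcal{F}$ on the convex set of densities to conclude that this stationary point is the unique global minimizer, and separately checks $0 < Z < \infty$ under the weaker hypothesis $J \ge -M$. You instead \emph{verify} the guessed $p^\star$ through the Gibbs variational identity $\mathcal{F}(p) = \lambda \KL(p\,\|\,p^\star) - \lambda \log Z$. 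That single identity yields global optimality, a.e.\ uniqueness (from the equality case of Gibbs' inequality), and the optimal value $-\lambda\log Z$, which the paper never states. The paper's route has the merit of explaining where the Boltzmann form comes from rather than requiring it to be guessed. However, it is formal in ways yours is not: it presumes the minimizer is interior ($p > 0$ a.e.) and differentiates under the integral without justification. Its step from stationarity to global minimality needs directional derivatives along every feasible direction $q - p^\star$, not just small admissible perturbations, and it never addresses infinite values of either term. Your argument needs no differentiation or convexity theory. Your bookkeeping genuinely closes the $\infty - \infty$ issue: restricting to $\mathcal{F}(p) < \infty$, using that both summands are bounded below, and checking $\int p^\star E < \infty$ so that $p^\star$ lies in the finite class. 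It also extends to the paper's relaxation $J \ge -M$ with cosmetic changes. Both summands stay bounded below, $Z \le e^{M/\lambda}$, and $s \mapsto s\,e^{-s/\lambda}$ remains bounded on $[-M,\infty)$.
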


\warn{A proof is provided in the \supp.}
Theorem~\ref{thm:optimal-distribution} reframes optimization as inference: the optimal solution of \eqref{eq:regularized-trajectory-policy-optimization} is obtained by \emph{Boltzmann-tilting} the prior \(p_0\) with the energy \(E(\theta)\), namely \(p^\star(\theta)\propto p_0(\theta)\exp(-E(\theta)/\lambda)\) (a.k.a. the Gibbs posterior \cite{mackay03book-information,jaynes57physrev-information}). Intuitively, this tilt exponentially \emph{upweights} controller parameters \(\theta\) that incur lower energy, \ie lower expected trajectory cost. Thus, if we can efficiently sample from the corresponding (unnormalized) density---especially at low temperature \(\lambda\)---then samples concentrate near minimizers of the original TPO objective \eqref{eq:trajectory-policy-optimization}.
This ``control-as-inference'' characterization is not new: closely related forms appear in information-theoretic optimal control \cite{todorov06neurips-linearly,kappen05jsm-path,williams17jgcd-mppi,williams15arxiv-mppi} and reinforcement learning \cite{schulman15icml-trpo,haarnoja18icml-sac,levine18arxiv-controlasinference,ma25icml-efficient}. 

While Theorem~\ref{thm:optimal-distribution} gives a closed-form solution to the KL-regularized TPO problem, sampling from the Boltzmann-tilted distribution \eqref{eq:optimal-distribution} remains challenging. At low temperature, the target distribution can be sharply concentrated and highly multimodal, reflecting the nonconvexity of the underlying TPO objective \eqref{eq:trajectory-policy-optimization}. To the best of our knowledge, sampling from such hard targets has received limited attention in the optimal control and reinforcement learning literature. Existing approaches are typically based on importance sampling (\eg MPPI and the cross-entropy method \cite{de05aor-cem}) or on Markov chain Monte Carlo (MCMC) (\eg Bayesian motion planning \cite{zhi23icraw-bayesian,kondic24smthesis-monte} and energy-based RL policies \cite{haarnoja17icml-energy-based,sallans20neurips-energy}). For sharp, multimodal targets, the former often requires many samples, while the latter can suffer from long mixing times.

\textbf{Contribution.} We develop an algorithm for sampling from the Boltzmann-tilted distribution \eqref{eq:optimal-distribution} that combines sampling-based exploration with gradient-based optimization, and applies to both TO and PO. Our approach builds on \emph{tempered sequential Monte Carlo} (TSMC), a state-of-the-art tool for multimodal, low-temperature targets in Bayesian inference \cite{neal01sc-ais,doucet01book-smc,delmoral06jrsc-smc,dai22jasa-smc}. TSMC introduces a \emph{tempering path} that gradually deforms an easy-to-sample prior into the desired Boltzmann tilt (\cf Fig.~\ref{fig:tsmc-overview}); at each tempering level, it reweights and resamples particles to focus computation on low-energy regions, then applies Hamiltonian Monte Carlo (HMC) rejuvenation to restore diversity and enable local exploration while preserving the tempered target. We show that TSMC can be applied directly to TO, where \(\nabla_\theta E(\theta)\) is available via automatic differentiation through rollouts. For PO, we introduce practical variants based on (i) a deterministic approximation of \(E(\theta)\) and (ii) an extended-space construction that augments each particle with a batch of initial conditions. Experiments across TO and PO benchmarks demonstrate robust performance and improved results relative to existing methods.

\textbf{Outline.} Section~\ref{sec:tsmc} provides a tutorial-style introduction to TSMC. Sections~\ref{sec:tsmc-to} and \ref{sec:tsmc-po} apply TSMC to trajectory optimization and policy optimization, respectively, with experimental results in Sections~\ref{sec:experiments-to} and \ref{sec:experiments-po}. Section~\ref{sec:conclusion} concludes with limitations and future directions. \supp{} contains proofs, additional results, and related work.

\begin{figure*}[t]
    \centering
    \includegraphics[width=\textwidth]{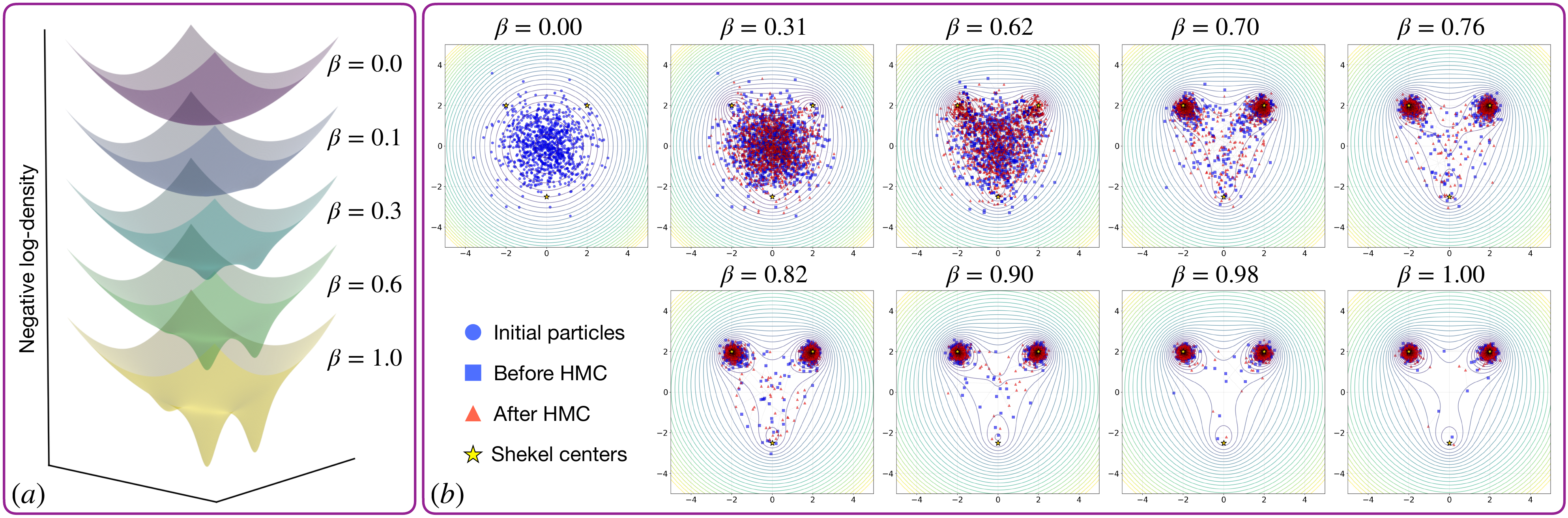}
    \vspace{-7mm}
    \caption{Tempered Sequential Monte Carlo samples from the Shekel-tilted distribution in Example~\ref{ex:shekel-tilted-distribution}. (a) Negative log-density of the Shekel-tilted distribution at different tempering values. (b) Particle locations before and after HMC rejuvenation on contours of the Shekel-tilted negative log-density.}
    \label{fig:tsmc-overview}
    \vspace{-6mm}
\end{figure*}
%!TEX root = ../main.tex

\section{Tempered Sequential Monte Carlo}
\label{sec:tsmc}

Recall that our goal is to sample from the Boltzmann-tilted distribution \eqref{eq:optimal-distribution}, where the energy function \(E(\theta)\) is differentiable in \(\theta\). This section introduces the key ingredients of tempered sequential Monte Carlo (TSMC). For concreteness, we use the following running example. 
% which is both multimodal and sharply concentrated.

\begin{example}[Shekel-tilted Distribution]\label{ex:shekel-tilted-distribution}
Let $\theta \in \Real{2}$ and consider the energy function defined by the negative Shekel function \cite{shekel71-shekel} with three centers:
    \begin{equation}
        E(\theta) = - \sum_{i=1}^3 \frac{1}{\Vert \theta - c_i \Vert^2 + s_i},
    \end{equation}
    where $c_1 = (2,2)$, $c_2 = (-2,2)$, and $c_3 = (0,-2.5)$ are the centers, and $s_1 = s_2 = 0.5$, $s_3 = 1.2$ are the widths. This nonconvex landscape has three local minima at the three centers, with $c_1$ and $c_2$ being global minima.
    Let $p_0$ be the standard Gaussian, and define the Shekel-tilted distribution by
    \begin{equation}
        p(\theta) \propto \exp\left(-\left(\frac{1}{2}\Vert \theta \Vert^2 + \frac{1}{\lambda}E(\theta) \right)\right).
    \end{equation}
    The lowest surface in Fig.~\ref{fig:tsmc-overview}(a) plots the negative log-density $\frac{1}{2}\Vert \theta \Vert^2 + \frac{1}{\lambda}E(\theta)$ with $\lambda=0.1$, revealing multimodality of $p(\theta)$.
\end{example}

\subsection{Tempered Importance Sampling}
\label{subsec:tsmc-tempered-importance-sampling}

To address multimodality of the Boltzmann-tilted distribution \eqref{eq:optimal-distribution}, TSMC introduces a \emph{tempering path} that gradually deforms the prior \(p_0\) into the target \(p^\star\). Concretely, choose an increasing sequence of scalars $0 =\beta_0 < \beta_1 < \cdots < \beta_K = 1$ and define intermediate distributions
\begin{equation}\label{eq:tempered-path}
p_k(\theta)\ \propto  p_0(\theta)\exp\!\left(-\frac{\beta_k}{\lambda} E(\theta) \right), \quad k=1,\dots,K.
\end{equation}
Clearly, $p_0$ coincides with the prior, and $p_K$ coincides with the target $p^\star$. 
Fig.~\ref{fig:tsmc-overview}(a) plots the negative log-density of $p_k(\theta)$ for the Shekel-tilted distribution at different values of $\beta$. As $\beta$ increases, the nonconvex Shekel term is weighted more heavily relative to the convex quadratic, smoothly deforming the landscape from convex to strongly nonconvex. 
To tractably sample from $p_k$, we maintain a weighted particle approximation \(\{(\theta_k(i), w_k(i))\}_{i=1}^N\) with normalized weights $\sum_{i=1}^N w_k(i)=1$. Here \(k\) indexes the tempering step and \(i\) indexes the \(i\)-th particle (and its weight).

\textbf{Importance Sampling.} Suppose we have a weighted particle approximation \(\{(\theta_{k-1}(i), w_{k-1}(i))\}_{i=1}^N\) for \(p_{k-1}\). To update it to approximate \(p_k\), we reweight each particle by the \emph{incremental likelihood ratio}
\begin{equation}\label{eq:tsmc-incremental-weight}
\Delta w_k(i)  = \exp\!\Big(-\frac{\beta_k-\beta_{k-1}}{\lambda}E(\theta_{k-1}(i))\Big)
\end{equation}
and then renormalize,
\begin{equation}\label{eq:tsmc-weight-update}
w_k(i) = \frac{w_{k-1}(i) \cdot \Delta w_k(i)}{\sum_{j=1}^N w_{k-1}(j) \cdot \Delta w_k(j)}, \qquad i=1,\dots,N.
\end{equation}
The factor \(\Delta w_k(i)\) is (up to a constant) the ratio \(p_k(\theta(i))/p_{k-1}(\theta(i))\), so particles with lower energy \(E(\theta)\) are exponentially favored (upweighted) as \(\beta\) increases.

\textbf{Adaptive Tempering.}
As \(\beta_k\) increases, weight disparity grows and the particle approximation may degenerate, commonly tracked by the effective sample size (ESS),
\begin{equation}\label{eq:ess}
\mathrm{ESS}_k \triangleq \frac{1}{\sum_{i=1}^N (w_k(i))^2}\ \in\ [1,N],
\end{equation}
where the bounds correspond to complete degeneracy (one particle has all weight) and uniform weights ($w_k(i)=1/N, i=1,\dots,N$), respectively. In practice, we use ESS to choose the next temperature \(\beta_k\) adaptively: given particles from \(p_{k-1}\), we pick \(\beta_k\in(\beta_{k-1},1]\) (\eg using bisection) so that the ESS \emph{after} reweighting by \eqref{eq:tsmc-weight-update} hits a target level, such as \(\mathrm{ESS}_k \approx \rho N\) for a fixed \(\rho\in(0,1)\). This yields an adaptive tempering schedule that prevents abrupt weight collapse. For the Shekel-tilted distribution, Fig.~\ref{fig:tsmc-overview}(b) shows the resulting adaptive schedule \(\{\beta_k\}_{k=0}^K\) obtained by targeting \(\mathrm{ESS}_k \approx \rho N\) with \(\rho=0.9\).

\textbf{Resampling.} After selecting \(\beta_k\) and computing the updated weights via \eqref{eq:tsmc-weight-update}, the particle approximation of \(p_k\) is \emph{weighted}. Resampling converts this weighted set into an \emph{approximately unweighted} one, which (i) prevents gradual accumulation of weight disparity across steps and (ii) focuses computation on particles that have non-negligible mass under \(p_k\). Concretely, we draw ancestor indices \(a_1,\dots,a_N\) i.i.d.\ from the categorical distribution with probabilities \((w_k(1),\dots,w_k(N))\), set \(\theta_k(i) \leftarrow \theta_{k-1}(a_i)\), and reset weights to \(w_k(i)=1/N\).

Repeated reweighting and resampling can reduce particle diversity: the population may collapse onto a small number of distinct particles that carry most of the mass. Moreover, without additional moves, the particle locations remain restricted to the initial sample \(\{\theta_0(i)\}_{i=1}^N\) and cannot explore new parameter values. Next, we incorporate MCMC transitions to rejuvenate the particle set and enable local exploration while preserving the tempered target distribution.

\subsection{MCMC Rejuvenation}
\label{subsec:tsmc-mcmc-rejuvenation}

At each tempering level \(k\) and after resampling, we apply a Markov transition kernel \(\calT_k(\theta'\mid\theta)\) to each particle. The key requirement is that \(\calT_k\) leaves \(p_k\) \emph{invariant}, meaning that if \(\theta \sim p_k\) and \(\theta'\sim \calT_k(\cdot\mid\theta)\), then \(\theta' \sim p_k\). Equivalently,
\begin{equation}\label{eq:mcmc-invariance}
\int p_k(\theta)\,\calT_k(\theta'\mid\theta)\,d\theta \ =\ p_k(\theta').
\end{equation}
Thus, rejuvenation can be viewed as a ``move'' step that improves diversity without biasing the particle approximation. 
For illustration, Fig.~\ref{fig:tsmc-overview}(b) overlays particle locations before and after MCMC rejuvenation on contours of the Shekel-tilted negative log-density. The rejuvenation step proposes distant moves, enabling exploration of the parameter space.

\textbf{Hamiltonian Monte Carlo.} There are many MCMC kernels that can be used for rejuvenation. Here we focus on Hamiltonian Monte Carlo (HMC) \cite{duane87plb-hmc,neal11chapter-hmc,betancourt17arxiv-hmc}, which is a popular choice for its ability to propose distant moves (and hence improve particle diversity).
For each tempering level \(k\), define the (unnormalized) negative log-density (potential)
\begin{equation}\label{eq:tsmc-potential}
V_k(\theta)\ \triangleq\ \frac{\beta_k}{\lambda}E(\theta)\ -\ \log p_0(\theta),
\end{equation}
so that \(p_k(\theta)\propto \exp\!\big(-V_k(\theta)\big)\) (\cf the definition in \eqref{eq:tempered-path}). HMC augments \(\theta\) with an auxiliary momentum \(r\in\Real{d}\) drawn from a Gaussian \(r\sim\mathcal{N}(0,M)\), where \(M\succ 0\) is a mass matrix, and defines the Hamiltonian
$H_k(\theta,r) \triangleq V_k(\theta) + 0.5r^\top M^{-1}r$.
HMC proposes distant moves by approximately simulating Hamiltonian dynamics under \(H_k\) using a symplectic leapfrog integrator \cite{hairer06book-gni}. In particular, given step size \(\varepsilon>0\) and number of steps \(L\), a single HMC transition proceeds as follows:
\begin{itemize}
    \item \emph{Refresh momentum:} sample \(r_0 \sim \mathcal{N}(0,M)\); set \(\theta_0=\theta\).
    \item \emph{Leapfrog integration:} for \(s=0,\dots,L-1\),
    \begin{equation}\label{eq:hmc-leapfrog}
    \begin{aligned}
    r_{s+\frac{1}{2}} &= r_s - \tfrac{\varepsilon}{2}\nabla_\theta V_k(\theta_s),\\
    \theta_{s+1} &= \theta_s + \varepsilon\,M^{-1}r_{s+\frac{1}{2}},\\
    r_{s+1} &= r_{s+\frac{1}{2}} - \tfrac{\varepsilon}{2}\nabla_\theta V_k(\theta_{s+1}).
    \end{aligned}
    \end{equation}
    \item \emph{Metropolis correction:} Define $
(\theta', r') = (\theta_L, -r_L)$ and accept the proposal with probability
\begin{equation}\label{eq:hmc-accept}
\min\left\{1,\exp\!\big(-H_k(\theta',r') + H_k(\theta_0,r_0)\big)\right\}.
\end{equation}
Otherwise set $\theta'=\theta_0$. Return the new position $\theta'$.
\end{itemize}
Intuitively, invariance follows because leapfrog defines a reversible, volume-preserving proposal map for the Hamiltonian dynamics \cite{neal11chapter-hmc}, and the Metropolis--Hastings accept/reject step corrects the discretization error to enforce detailed balance with respect to the target density \(p_k\) \cite{robert99book-mc}. Detailed balance implies that \(\calT_k\) leaves \(p_k\) invariant, \ie \eqref{eq:mcmc-invariance} holds. 

\textbf{Parameter Tuning.} The HMC kernel has three key parameters: the mass matrix \(M\), the step size \(\varepsilon\), and the number of leapfrog steps \(L\). Ideally, \(M\) matches the local scaling/correlation structure of the target (so different coordinates evolve on comparable time scales), \(\varepsilon\) is as large as possible while maintaining a reasonable acceptance probability, and the trajectory length \(L\varepsilon\) is long enough to make a substantive move without wasting computation by retracing the trajectory.

In our implementation, we set \(M=I\) since estimating a well-conditioned, geometry-adapted mass matrix requires additional tuning/curvature information \cite{hoffman14jmlr-nuts,betancourt17arxiv-hmc,girolami11jrss-rmhmc,wang13icml-adaptive}. We manually select \(\varepsilon\), and use the NUTS sampler \cite{hoffman14jmlr-nuts} to adaptively determine the effective number of integration steps \(L\) on the fly. Intuitively, this avoids hand-tuning \(L\): it extends the Hamiltonian trajectory until further integration would start to reverse direction in parameter space, yielding long proposals when beneficial while preventing redundant trajectories.

\textbf{Summary.} TSMC tackles the multi-modal, low-temperature target \eqref{eq:optimal-distribution} by introducing a tempering path that gradually deforms the easy-to-sample prior into the desired Boltzmann tilt. At each tempering level, we (i) reweight particles by importance sampling (using ESS to adaptively choose the next temperature), (ii) resample to control weight degeneracy, and (iii) apply HMC rejuvenation to restore diversity and explore the parameter space while preserving the tempered target distribution. Fig.~\ref{fig:tsmc-overview}(b) shows that TSMC successfully samples from the Shekel-tilted distribution: the final particles are well-distributed across the two global minima, with a small number of particles exploring the third local minimum. \warn{In \supp, we provide the asymptotic theoretical guarantees offered by TSMC.}
Next, we apply TSMC to trajectory optimization (TO) and policy optimization (PO).

%!TEX root = ../main.tex

\section{TSMC for Trajectory Optimization}
\label{sec:tsmc-to}

Recall from \eqref{eq:energy-function} the energy \(E(\theta)=\mathbb{E}_{x_0\sim\mu}\!\left[J(\tau(x_0,\theta))\right]\). For trajectory optimization (TO), \(\mu=\delta_{x_0}\), so \(E(\theta)=J(\tau(x_0,\theta))\) and \(\nabla_\theta E(\theta)\) can be computed exactly by differentiating through the trajectory rollout, using the adjoint method.

\begin{theorem}[Exact Gradients for TO]
\label{thm:gradient-computation-trajectory-optimization}
    Assume the dynamics \(f(x,u)\) is differentiable in $(x,u)$, the controller \(\pi_\theta(x)\) is differentiable in \((x,\theta)\), the stage cost functions \(\{ \ell_t \}_{t=0}^{T-1}\) are differentiable in \((x, u)\), and the terminal cost \(\ell_T\) is differentiable in \(x\). Starting from an initial state $x_0$, let $\tau=(x_0, u_0, \ldots, x_T)$ be a trajectory rollout generated by the controller $\pi_\theta(x)$. Define
    \begin{equation}\label{eq:gradient-dynamics-controller}
    \begin{aligned} 
    \hspace{-2mm} A_t \triangleq \frac{\partial f}{\partial x}(x_t,u_t) \in \Real{n\times n},  \ \ 
    B_t \triangleq \frac{\partial f}{\partial u}(x_t,u_t) \in \Real{n\times m}, \\
    \hspace{-2mm} L_t \triangleq \frac{\partial \pi_\theta}{\partial x}(x_t) \in \Real{m\times n}, \ \ 
    G_t \triangleq \frac{\partial \pi_\theta}{\partial \theta}(x_t) \in \Real{m\times d},\\
\ell_{x,t}\triangleq \frac{\partial \ell_t}{\partial x}(x_t,u_t) \in \Real{n}, \ \ 
\ell_{u,t}\triangleq \frac{\partial \ell_t}{\partial u}(x_t,u_t) \in \Real{m}, \\
\ell_{x,T}\triangleq \frac{\partial \ell_T}{\partial x}(x_T) \in \Real{n}
\end{aligned}
\end{equation}
and the costate (adjoint) recursion backward in time:
\begin{equation}\label{eq:adjoint-recursion}
\begin{aligned}
\varphi_T = \ell_{x,T}, \ \  \text{and for } t=T-1,\dots,0: \\
g_t = \ell_{u,t} + B_t^\top \varphi_{t+1}, \ \ 
\varphi_t = \ell_{x,t} + L_t^\top g_t + A_t^\top \varphi_{t+1}.
\end{aligned}
\end{equation}
Then the gradient of the trajectory cost with respect to \(\theta\) is
\begin{equation}\label{eq:grad-J}
\nabla_\theta J(\tau(x_0,\theta)) = \sum_{t=0}^{T-1} G_t^\top g_t.
\end{equation}
\end{theorem}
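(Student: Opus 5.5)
I plan to prove this by reverse-mode differentiation through the rollout, organized as a backward induction on the cost-to-go. For \(t=0,\dots,T\), define the tail cost as a function of the state at time \(t\) and the parameters,
\begin{equation*}
V_t(x;\theta) \triangleq \sum_{s=t}^{T-1} \ell_s(x_s,u_s) + \ell_T(x_T), \qquad x_t = x,\ u_s=\pi_\theta(x_s),\ x_{s+1}=f(x_s,u_s),
\end{equation*}
so that \(J(\tau(x_0,\theta)) = V_0(x_0;\theta)\). These tail costs satisfy the recursion
\begin{equation*}
V_t(x;\theta) = \ell_t(x,\pi_\theta(x)) + V_{t+1}\bigl(f(x,\pi_\theta(x));\theta\bigr),
\qquad V_T(x;\theta)=\ell_T(x).
\end{equation*}
Each \(V_t\) is a composition of differentiable maps, so it is differentiable in \((x,\theta)\) by the stated assumptions and the chain rule.

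I will prove the following claim by backward induction on \(t\), evaluated along the nominal rollout \(x=x_t\):
\begin{equation*}
\nabla_x V_t(x_t;\theta)=\varphi_t, \qquad \nabla_\theta V_t(x_t;\theta)=\sum_{s=t}^{T-1} G_s^\top g_s .
\end{equation*}

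\textbf{Base case.} At \(t=T\), \(\nabla_x V_T = \ell_{x,T} = \varphi_T\), and \(\nabla_\theta V_T = 0\), which is the empty sum.

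\textbf{Inductive step.} Differentiate the recursion at \(x_t\), where \(u_t = \pi_\theta(x_t)\) and \(x_{t+1} = f(x_t,u_t)\). The state \(x\) enters through three paths: directly in \(\ell_t\), through \(u=\pi_\theta(x)\), and through \(f\). Differentiating with respect to \(x\) gives
\begin{equation*}
\nabla_x V_t = \ell_{x,t} + L_t^\top \ell_{u,t} + \bigl(A_t + B_t L_t\bigr)^\top \nabla_x V_{t+1}(x_{t+1};\theta).
\end{equation*}
Substituting the inductive hypothesis \(\nabla_x V_{t+1}=\varphi_{t+1}\) and grouping \(L_t^\top(\ell_{u,t}+B_t^\top\varphi_{t+1}) = L_t^\top g_t\) yields exactly \(\varphi_t\) as defined in \eqref{eq:adjoint-recursion}.

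For the \(\theta\)-derivative, \(\theta\) enters in two ways. It enters through \(u_t\), contributing \(G_t^\top(\ell_{u,t} + B_t^\top \varphi_{t+1}) = G_t^\top g_t\). It also enters through the explicit \(\theta\)-dependence of \(V_{t+1}\) with its state argument held fixed, contributing \(\nabla_\theta V_{t+1}(x_{t+1};\theta)\). The inductive hypothesis gives \(\nabla_\theta V_{t+1} = \sum_{s\ge t+1} G_s^\top g_s\), and adding the two contributions completes the induction. Setting \(t=0\) gives \eqref{eq:grad-J}.

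The main point needing care is the distinction between the total \(\theta\)-derivative and the partial derivative with the state argument fixed. \(V_{t+1}\) depends on \(\theta\) both explicitly and through \(x_{t+1}\), and the induction must track only the explicit part in \(\nabla_\theta V_{t+1}\) to avoid double counting. The \(x_{t+1}\)-path is already captured by \(\varphi_{t+1}\) inside \(g_t\).

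As a cross-check, I could instead form the Lagrangian \(J + \sum_t \varphi_{t+1}^\top (f(x_t,u_t) - x_{t+1})\), choose the multipliers so that the \(\partial/\partial x_t\) terms vanish, and read off the gradient. This gives the same recursion. For open-loop TO, where \(L_t=0\) and \(G_t\) selects \(u_t\), the formula reduces to the familiar \(\partial J/\partial u_t = g_t\).
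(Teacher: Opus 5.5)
Your proof is correct and takes essentially the paper's approach. The paper omits the proof, saying only that it follows by repeated application of the chain rule (equivalently, reverse-mode automatic differentiation), and your backward induction on the tail cost $V_t(x;\theta)$ — tracking $\nabla_x V_t=\varphi_t$ separately from the explicit $\theta$-partial so that the path through $x_{t+1}$ is not double counted — is a careful write-up of exactly that reverse-mode computation.
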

The result follows by repeated application of the chain rule (equivalently, reverse-mode automatic differentiation) and we omit the proof for brevity. In practice, when \(f\), \(\pi_\theta\), \(\{\ell_t\}_{t=0}^{T-1}\), and \(\ell_T\) are implemented in modern autodiff frameworks (\eg \textsc{PyTorch} or \textsc{JAX}), the adjoint recursion is computed efficiently by backpropagation through the differentiable rollout.

In the TO setting with open-loop parameterization \(\theta=(u_0,\ldots,u_{T-1})\in\Real{Tm}\) and \(\pi_\theta(x_t)=u_t\) (no state feedback), \(\frac{\partial \pi_\theta}{\partial x}(x_t)=0\), hence \(L_t=0\). Moreover, \(\frac{\partial \pi_\theta}{\partial \theta}(x_t)\) simply selects the \(t\)-th control block, \ie \(G_t=[0,\ldots,I_m,\ldots,0]\in\Real{m\times Tm}\), where the only nonzero block is the \(t\)-th \(m\times m\) identity.
%!TEX root = ../main.tex

\section{TSMC for Policy Optimization}
\label{sec:tsmc-po}

For policy optimization (PO), the energy \(E(\theta)\) is an expectation over random rollouts (through \(x_0\sim\mu\)), and is generally intractable to evaluate exactly under nonlinear dynamics. In practice, both \(E(\theta)\) and \(\nabla_\theta E(\theta)\) are approximated using Monte Carlo rollouts, yielding stochastic estimates. This raises a natural question: when only stochastic estimates of \(E(\theta)\) and \(\nabla_\theta E(\theta)\) are available, can we still use TSMC to sample from the optimal Boltzmann-tilted distribution \(p^\star\) in \eqref{eq:optimal-distribution}?

A first idea is to replace the deterministic HMC rejuvenation kernel from Section~\ref{sec:tsmc} with a stochastic-gradient variant (\eg stochastic gradient HMC \cite{chen14icml-sghmc}). While this can address the \emph{move} (MCMC) step, it does not resolve the \emph{weighting} step: the TSMC incremental weights \eqref{eq:tsmc-incremental-weight} depend on factors of the form \(\exp(-c\,E(\theta))\) (and ratios thereof) for a constant \(c>0\), and an unbiased estimator \(\widehat E(\theta)\) does \emph{not} in general yield an unbiased estimator of \(\exp(-c\,E(\theta))\). A principled route would be to construct a \emph{nonnegative unbiased estimator} \(\widehat W(\theta)\ge 0\) such that \(\mathbb{E}\big[\widehat W(\theta)\big]=\exp(-c\,E(\theta))\), and then plug \(\widehat W(\theta)\) into the incremental weights. Unfortunately, nonnegative unbiased estimation of nonlinear functionals such as \(\phi(z)=\exp(-z)\) is highly nontrivial and, in general, impossible without additional structure; see \cite{jacob15aos-nonnegative-unbiased}. Even when such estimators exist, they typically rely on specialized ``factory'' constructions and can require elaborate randomized procedures; see, \eg \cite{koskela25arxiv-debiased}.

Due to these challenges, we use two practical alternatives: (i) a deterministic approximation of \(E(\theta)\) and (ii) an extended-space TSMC variant.

\subsection{Deterministic Approximation}
\label{subsec:deterministic-approximation}

A simple way to recover a deterministic energy is to approximate the initial-state distribution \(\mu\) by an empirical measure supported on a \emph{fixed} set of \(B\) initial conditions \(\{x_0(b)\}_{b=1}^B\):
\begin{equation}\label{eq:muB}
\mu_B \;=\; \frac{1}{B}\sum_{b=1}^B \delta_{x_0(b)}.
\end{equation}
This yields the surrogate energy
\begin{equation}\label{eq:EB-po}
\hspace{-2mm} E_B(\theta):=\mathbb{E}_{x_0\sim\mu_B} \big[J(\tau(x_0,\theta))\big]
=\frac{1}{B}\sum_{b=1}^B J(\tau(x_0(b),\theta))
\end{equation}
and the corresponding Boltzmann-tilted target \(p^\star_B(\theta)\propto p_0(\theta)\exp(-E_B(\theta)/\lambda)\) (and similarly for the intermediate tempered targets along the TSMC path).
Crucially, once \(\{x_0(b)\}_{b=1}^B\) is fixed, both \(E_B(\theta)\) and \(\nabla_\theta E_B(\theta)\) are deterministic and can be computed by differentiating through each rollout using Theorem~\ref{thm:gradient-computation-trajectory-optimization} and averaging. Therefore, the TSMC algorithm from Section~\ref{sec:tsmc} does not need to be modified: incremental weights are evaluated using \(E_B(\theta)\), and the HMC rejuvenation kernel uses \(\nabla_\theta E_B(\theta)\) as usual.

% The main drawback is that the quality of this approximation depends on how well the fixed support points \(\{x_0(b)\}\) represent \(\mu\). In high-dimensional state spaces, selecting and fixing a representative set of initial conditions manually can be difficult, and too small or poorly chosen supports may lead to biased targets and brittle performance.

\subsection{Extended-space TSMC}
\label{subsec:extended-space-tsmc}

Pseudo-marginal methods address intractable expectations by augmenting the state space with auxiliary random variables, and then sampling the desired marginal by running a standard sampler on the extended space \cite{andrieu09aos-pseudomarginal,andrieu10jasa-pmcmc,chopin13jrssb-smc2,alenlov21jmlr-pmhmc}. Here we adopt this idea for PO by treating the rollout randomness (initial conditions) as part of the particle state.

\textbf{Extended-space Tempered Targets.}
Let each particle carry controller parameters \(\theta\in\Real{d}\) and a batch of \(B\) i.i.d.\ initial conditions \(x_0(1),\dots,x_0(B)\sim\mu\). We shorthand \(\mathbf{x}_0 \triangleq (x_0(1),\dots,x_0(B))\). Define the empirical average rollout cost
\begin{equation}\label{eq:pm-batch-cost}
\bar J_B(\theta, \mathbf{x}_0) \ \triangleq\ \frac{1}{B}\sum_{b=1}^B J(\tau(x_0(b),\theta)).
\end{equation}
We then define a sequence of tempered distributions, for $k=0,\dots,K$, on the extended space \((\theta,\mathbf{x}_0)\in\Real{d}\times(\Real{n})^{B}\):
\begin{equation}\label{eq:pm-tempered-path}
\tilde p_k(\theta,\mathbf{x}_0) \propto p_0(\theta)\prod_{b=1}^B \mu(x_0(b)) \exp\!\left(-\frac{\beta_k}{\lambda}\,\bar J_B(\theta,\mathbf{x}_0)\right).
\end{equation}
Running TSMC on \(\{\tilde p_k\}\) yields samples \((\theta,\mathbf{x}_0)\sim \tilde p_K\); discarding \(\mathbf{x}_0\) returns \(\theta\) distributed according to the marginal
\begin{equation}\label{eq:pm-marginal}
\tilde p_K(\theta) \propto p_0(\theta)\,\mathbb{E}_{\mathbf{x}_0\sim \mu^{\otimes B}}\!\left[\exp\!\left(-\frac{1}{\lambda}\,\bar J_B(\theta,\mathbf{x}_0)\right)\right]\!.
\end{equation}
This marginal differs from the original Boltzmann tilt \(p^\star(\theta)\propto p_0(\theta)\exp(-E(\theta)/\lambda)\): it is a \emph{risk-sensitive} (exponential utility) objective \cite{jacobson73tac-exponential}, reflecting that in general \(\mathbb{E}[\exp(-Z)]\neq \exp(-\mathbb{E}[Z])\). 
% As \(B\) increases, the batch average \(\bar J_B\) concentrates and the discrepancy between \eqref{eq:pm-marginal} and \(p^\star\) diminishes under standard concentration/regularity assumptions (\eg bounded or sub-Gaussian costs).
Indeed, write
\begin{equation}
\hspace{-6mm} \delta_B(\theta)
\coloneqq
\log \mathbb{E}_{\mathbf{x}_0\sim \mu^{\otimes B}}
\left[
\exp\left(-\frac{1}{\lambda}\bar J_B(\theta,\mathbf{x}_0)\right)
\right]
+\frac{1}{\lambda}E(\theta),\!\!\!\!\!\!\!
\end{equation}
we have, from~\eqref{eq:pm-marginal},
\begin{equation}
\tilde p_K(\theta)
\propto
p_0(\theta)\exp\left(-\frac{1}{\lambda}E(\theta)\right)
\exp(\delta_B(\theta)).
\end{equation}
Thus the extended-space marginal is a multiplicative
exponential-utility perturbation of the true Boltzmann tilt.
Moreover, if \(J(\tau(x_0,\theta))\) is sub-Gaussian under
\(x_0\sim\mu\) with variance proxy \(\sigma^2(\theta)\) \cite{wainwright19book-high}, then
\begin{equation}
0 \leq \delta_B(\theta)
\leq
\frac{\sigma^2(\theta)}{2B\lambda^2}.
\end{equation}
Consequently, the mismatch decreases as \(O(1/B)\), and is
small when the rollout-cost variance across initial states is
modest, the batch size \(B\) is large, and the temperature
\(\lambda\) is not too small.

\textbf{TSMC in the Extended Space.}
We maintain weighted particles \(\{(\theta_k(i),\mathbf{x}_{0,k}(i),w_k(i))\}_{i=1}^N\) approximating \(\tilde p_k\) as in \eqref{eq:pm-tempered-path}, where \(\mathbf{x}_{0,k}(i)\triangleq (x_{0,k}(i,1),\dots,x_{0,k}(i,B))\). The TSMC steps mirror those in Section~\ref{sec:tsmc}:
\begin{itemize}
    \item \emph{Initialize:} sample \(\theta_0(i)\sim p_0\) and \(x_{0,0}(i,b)\sim\mu\) i.i.d.\ for \(b=1,\dots,B\); set \(w_0(i)=1/N\).
    \item \emph{Importance weight update:} given particles for \(\tilde p_{k-1}\), update weights using the incremental ratio
    \begin{equation}\label{eq:pm-incremental-weight}
    \hspace{-5mm} \Delta w_k(i)\!=\!\exp\!\Big(-\!\frac{\beta_k-\beta_{k-1}}{\lambda}\!\,\bar J_B(\theta_{k-1}(i),\mathbf{x}_{0,k-1}(i))\!\Big),\!\!\!
    \end{equation}
    and normalize as in \eqref{eq:tsmc-weight-update}.
    \item \emph{Resample:} resample ancestor indices according to \(w_k(i)\) and reset weights to \(1/N\), as in Section~\ref{sec:tsmc}.
    \item \emph{HMC rejuvenation (move \(\theta\), hold \(\mathbf{x}_0\) fixed):} for each particle, apply an HMC move targeting the conditional density
    \(\tilde p_k(\theta\mid \mathbf{x}_0)\propto p_0(\theta)\exp\!\big(-\frac{\beta_k}{\lambda}\bar J_B(\theta,\mathbf{x}_0)\big)\).
    Define the unnormalized negative log-density (potential)
    \begin{equation}\label{eq:pm-potential}
    \tilde V_k(\theta)\ \triangleq\ \frac{\beta_k}{\lambda}\bar J_B(\theta,\mathbf{x}_0)\ -\ \log p_0(\theta),
    \end{equation}
    and Hamiltonian $\tilde H_k(\theta,r)\triangleq\tilde V_k(\theta) + 0.5r^\top M^{-1}r$,
    where \(r\sim\mathcal{N}(0,M)\). The gradient of the potential is
    \begin{equation}\label{eq:pm-potential-grad}
    \hspace{-2mm} \nabla_\theta \tilde V_k(\theta) \!=\! \frac{\beta_k}{\lambda B}\sum_{b=1}^B \nabla_\theta J(\tau(x_0(b),\theta)) \!-\! \nabla_\theta \log p_0(\theta),\!\!
    \end{equation}
    where each \(\nabla_\theta J(\tau(x_0(b),\theta))\) is obtained by Theorem~\ref{thm:gradient-computation-trajectory-optimization}.
    One HMC move then proceeds by (i) sampling \(r_0\sim\mathcal{N}(0,M)\), (ii) applying the leapfrog integrator \eqref{eq:hmc-leapfrog} with \(V_k\) replaced by \(\tilde V_k\) to obtain \((\theta_L,r_L)\), and (iii) accepting \(\theta'=\theta_L\) with probability \eqref{eq:hmc-accept} where \(H_k\) is replaced by \(\tilde H_k\) or otherwise keeping \(\theta'=\theta_0\).
    \item \emph{Refresh initial states (move \(\mathbf{x}_0\), hold \(\theta\) fixed):} to avoid restricting \(\mathbf{x}_0\) to a finite set of initial samples, we update the auxiliary batch using a Metropolis--Hastings refresh that targets \(\tilde p_k(\mathbf{x}_0\mid\theta)\). For each \(b=1,\dots,B\), propose \(x_0'(b)\sim \mu(\cdot)\) and accept with probability
    \begin{equation}\label{eq:pm-x0-refresh-accept}
    \hspace{-2mm}\min\!\left\{ \!1,\exp\!\left(-\frac{J(\tau(x_0'(b),\theta)) \!-\! J(\tau(x_0(b),\theta)) }{\lambda B / \beta_k}\right) \! \right\}.\!\!\!\!\!
    \end{equation}
    Otherwise, keep \(x_0(b)\).
\end{itemize}

The rejuvenation kernel at level $k$ is defined as the composition of (i) an HMC move on \(\theta\) targeting \(\tilde p_k(\theta\mid \mathbf{x}_0)\) with \(\mathbf{x}_0\) held fixed, and (ii) an MH refresh of \(\mathbf{x}_0\) targeting \(\tilde p_k(\mathbf{x}_0\mid \theta)\) with \(\theta\) held fixed. Each sub-step leaves its corresponding conditional invariant, hence their composition leaves the joint extended target \(\tilde p_k(\theta,\mathbf{x}_0)\) invariant.

\begin{remark}[Extension to Stochastic Dynamics] \label{remark:stochastic-dynamics}
    A natural extension of our framework is to stochastic dynamics of the form \(x_{t+1}=f(x_t,u_t)+w_t\), where \(w_t\) is a random disturbance. In that case, the energy becomes \(E(\theta)=\mathbb{E}_{x_0,w_{0:T-1}}[J(\tau(x_0,\theta;w_{0:T-1}))]\). Thus, at the level of the target distribution, TSMC carries over directly by {redefining the energy to average over both initial-state and disturbance randomness}. Moreover, the extended-space construction also generalizes naturally by augmenting the auxiliary variables with sampled disturbance trajectories. The resulting \(\theta\)-marginal is again a risk-sensitive surrogate of the exact Boltzmann target. 
\end{remark}
% \red{Maybe add Pseudocode.}

%!TEX root = ../main.tex

\section{Experiments: Trajectory Optimization}
\label{sec:experiments-to}

\begin{figure*}[h]
    \centering
    \includegraphics[width=\textwidth]{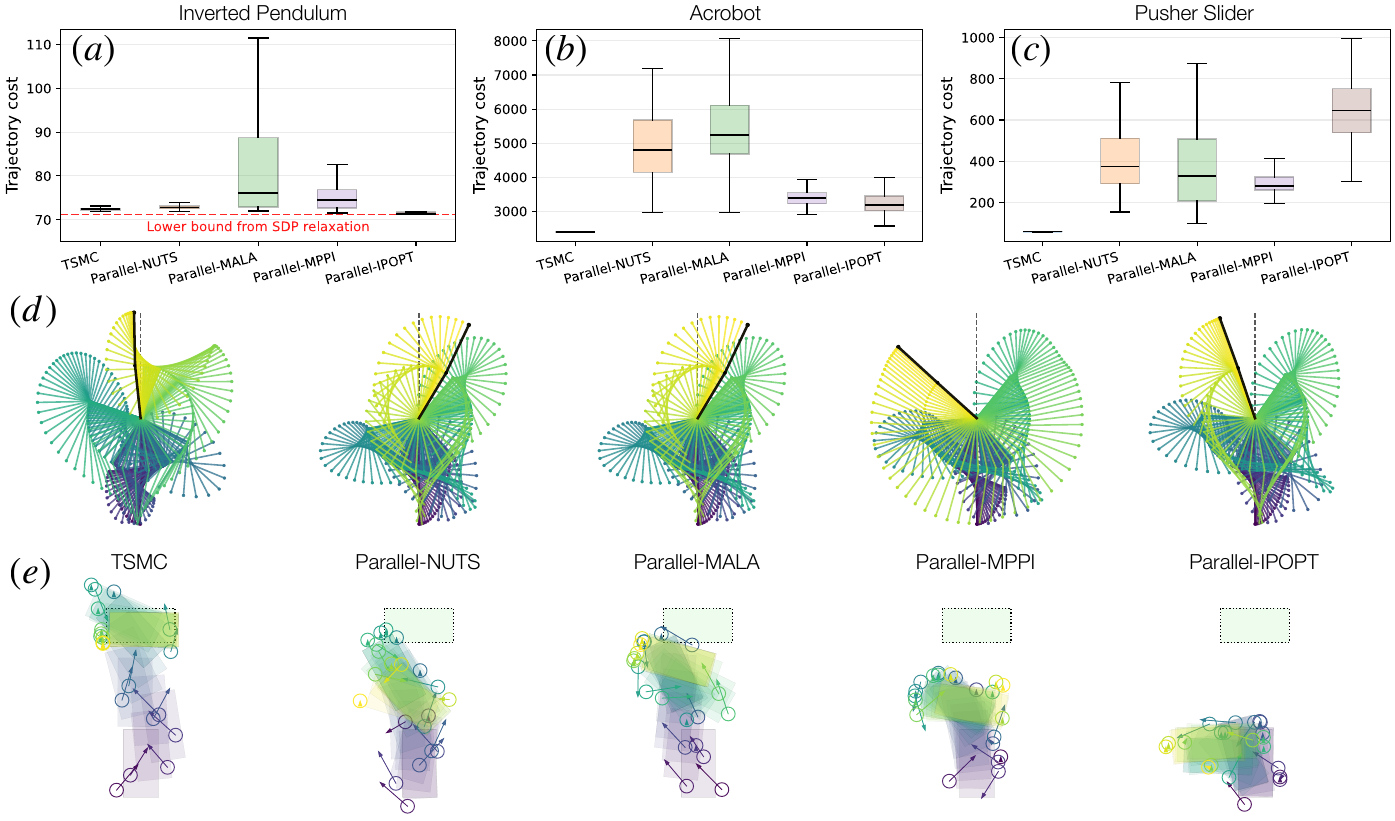}
    \vspace{-8mm}
    \caption{Trajectory optimization results. (a)--(c): boxplots of final-particle trajectory costs for \tsmc{} and baselines. (d)--(e): best-cost trajectories for Acrobot and pusher--slider, respectively. The colormap indicates progression from initial to final states.}
    \vspace{-4mm}
    \label{fig:to-figure}
\end{figure*}

We evaluate TSMC for trajectory optimization against baseline methods. All experiments run on a Lambda workstation with an AMD Ryzen Threadripper PRO 5975WX (32 cores) and two NVIDIA Ada6000 GPUs.

\textbf{Implementation.} We implement the dynamical systems and TSMC in \jax~\cite{bradbury18software-jax} to leverage automatic differentiation and GPU acceleration. We use \blackjax~\cite{cabezas24arxiv-blackjax} to implement TSMC; specifically, its adaptive TSMC framework with NUTS as the HMC rejuvenation kernel. We set the ESS threshold for selecting the next tempering parameter to $\rho=0.8$. 

% The dynamics \(f(\cdot,\cdot)\) and cost functions \(\{\ell_t\}\) are implemented in \jax\xspace and differentiated automatically within the TSMC pipeline. System details are provided in the subsections below.

\textbf{Baselines.} We compare \tsmc with:
\begin{itemize}
    \item \pnuts. \tsmc starts from particles sampled from \(p_0\). \pnuts uses the same initialization, but for each particle runs an independent MCMC using NUTS in \blackjax, targeting the optimal distribution \(p^\star\) directly. 
    
    % NUTS hyperparameters match those used in \tsmc.
    \item \pmala. Same as \pnuts, but using the Metropolis-adjusted Langevin algorithm (MALA) \cite{besag94jrssb-mala} in \blackjax. Since MALA is cheaper per iteration than NUTS, we run MALA for \(100\times\) more steps.
    \item \pmppi. Starting from the same initial particles as \tsmc, we run \(64\) MPPI planning steps per particle. We treat each particle as the nominal control, sample 64 noise perturbations, evaluate their trajectory costs, and update by an \(\exp(-\frac{1}{\lambda}\text{cost})\)-weighted average.
    \item \pipopt. Starting from the same initial particles, we run IPOPT using \casadi~\cite{andersson19mpc-casadi} with single shooting, initialized at each particle. We keep the \casadi~dynamics consistent with the \jax~implementation so the trajectory gradients with respect to controls match. Since IPOPT can be slow, we run it in parallel across CPU cores (25 workers). We set the maximum number of iterations to 200 and the stopping tolerance to \(10^{-6}\).
\end{itemize}
Comparing \tsmc with \pnuts and \pmala isolates the role of tempering for sampling from sharp, multimodal targets. Comparing with \pmppi highlights the benefit of combining importance sampling with gradient-based optimization. Finally, \pipopt serves as a simple hybrid baseline: gradient-based optimization from multiple random initializations. Across all methods, we use \(N=100\) particles.

\textbf{Prior.} We use a first-order autoregressive prior \cite{hamilton20book-time} over the open-loop control sequence to encourage smoothness: \(u_t = \gamma u_{t-1} + \epsilon_t\), where \(\epsilon_t \sim \mathcal{N}(0,\sigma^2 I)\), with \(\gamma=0.9\) and \(\sigma=0.3\).

% Next, we describe the dynamics of three systems and the experimental results.

\subsection{Inverted Pendulum}
\label{subsec:experiments-to-inverted-pendulum}

% We first consider the inverted pendulum problem with control bounds. This is a low-dimensional task for which good solutions are relatively easy to find, and we therefore expect all methods to perform well.

\textbf{Setup.} The goal is to swing up the pendulum to the upright configuration under control limits. We discretize the dynamics using a variational integrator~\cite{lee08thesis-computational,kang24wafr-strom,teng23rss-convex}. Thanks to its favorable energy behavior, this discretization permits a relatively large time step and a shorter horizon. We follow \cite[Section E.1]{kang24wafr-strom}; in particular, we use a time step $\Delta t = 0.1$ seconds and planning horizon $T = 30$. Since the control is one-dimensional, this yields $\theta \in \Real{30}$.

\textbf{Certified Lower Bound.} We use the semidefinite programming (SDP) relaxations in \cite{kang24wafr-strom,kang25rss-spot} to compute a certified lower bound on the optimal cost, providing a reference for assessing whether \tsmc{} (and baselines) reach global optimality.

\textbf{Results.} Fig.~\ref{fig:to-figure}(a) shows boxplots of the trajectory costs across the final particles for each method, along with the certified lower bound. We make three observations. (i) \tsmc, \pnuts, and \pipopt achieve similarly strong performance: their final particles attain costs close to the SDP lower bound, suggesting near-global solutions. (ii) \pmala performs reasonably, but exhibits a heavier tail: while the best cost is near the lower bound, some particles remain far from optimality. (iii) \pmppi outperforms \pmala but underperforms \tsmc, \pnuts, and \pipopt, highlighting the benefit of combining importance sampling with gradient-based optimization under differentiable dynamics. 

% Since the best-cost trajectories are near-optimal and visually similar across methods, \red{we include them in the \supp~for completeness.}

\subsection{Acrobot}
\label{subsec:experiments-to-acrobot}

% We next consider the acrobot problem with control bounds. This task is more challenging, and we will see a clearer separation between methods.

\textbf{Setup.} The goal is to swing up the acrobot (two-link pendulum) to the upright configuration using a single control at the elbow joint (Fig.~\ref{fig:to-figure}(d)).
% \red{The continuous-time dynamics of the acrobot system, its parameters, and the stage-wise and terminal cost functions are provided in \supp}.
We discretize the continuous-time dynamics using fourth-order Runge--Kutta (RK4) with time step $\Delta t = 0.025$ seconds and planning horizon $T = 200$. Since the control is one-dimensional, this yields $\theta \in \Real{200}$.

\textbf{Results.} Fig.~\ref{fig:to-figure}(b) shows boxplots of the trajectory costs across the final particles for each method. Here, \tsmc{} significantly outperforms all baselines. Fig.~\ref{fig:to-figure}(d) visualizes the best-cost trajectories from each method (final state highlighted in thick black): \tsmc{} is the only method that achieves a successful swing-up close to the upright configuration. Numerically, the best trajectory cost from \tsmc{} is $2389$, while the best costs from the baselines are $2967$ for \pnuts, $2967$ for \pmala, $2694$ for \pmppi, and $2562$ for \pipopt.

\subsection{Pusher Slider}
\label{subsec:experiments-to-pusher-slider}

\textbf{Setup.} We consider the planar pusher--slider problem, which features nonsmooth contact dynamics \cite{mason86ijrr-mechanics,lynch96ijrr-stable,hogan20wafr-feedback}. Inspired by \cite{kang25rss-spot}, we model the pusher--slider dynamics only during contact between the pusher (a circular disc) and the slider (a rectangular box), see Fig.~\ref{fig:to-figure}(e); any free-space motion of the pusher is assumed to be planned separately. The state is the planar pose of the slider in world coordinates, \(x=[b_x,b_y,\alpha]\in\mathbb{R}^3\), where \((b_x,b_y)\) is the box center and \(\alpha\) is the box orientation. We use a 4D \emph{contact-mode} control \(u=[e,s,v_x,v_y]\in\mathbb{R}^4\): \(e\in\{1,2,3,4\}\) selects the contacted edge, \(s\in[-s_{\max},s_{\max}]\) selects the contact location along that edge (with \(s=\pm 1\) corresponding to corners and \(s_{\max}\in(0,1]\) as a safety margin), and \(v_x,v_y\) are pusher velocity commands expressed in the slider frame and bounded componentwise by \(|v_x|,|v_y|\le u_{\max}\). Given \((e,s,v_x,v_y)\), we use the quasi-static model in \cite{lynch96ijrr-stable} to compute the slider twist and update the slider pose. This model is differentiable except for the discrete edge variable \(e\). To obtain an end-to-end differentiable pipeline, we relax \(e\) by introducing a continuous variable \(\hat e\in[1,4]\), computing its distances to the integers in \(\{1,2,3,4\}\), mapping these distances to a categorical distribution via a softmax, and then using the Gumbel--Softmax trick \cite{jang17iclr-categorical} to sample \(e\) through a differentiable relaxation. The trajectory cost includes per-stage pose error to the goal pose and a terminal pose error, together with a temporal-smoothness penalty that discourages frequent contact-mode switches. We use a planning horizon \(T=200\), which yields \(\theta\in\Real{800}\). For \casadi{}+IPOPT, we cannot use Gumbel--Softmax; instead, we use a soft mixture model that forms a contact normal direction as a weighted combination of per-edge normals using the softmax weights. Since this mixture is not physically realistic, after IPOPT we round the edge variables to the nearest integers and forward simulate to obtain physically valid trajectories for cost evaluation.

\textbf{Results.} Fig.~\ref{fig:to-figure}(c) shows boxplots of trajectory costs across the final particles for each method. Here, \tsmc{} significantly outperforms all baselines and is the only method that reaches the goal pose. Fig.~\ref{fig:to-figure}(e) visualizes the best-cost trajectories.

\section{Experiments: Policy Optimization}
\label{sec:experiments-po}

\begin{figure*}[t]
    \centering
    \includegraphics[width=\textwidth]{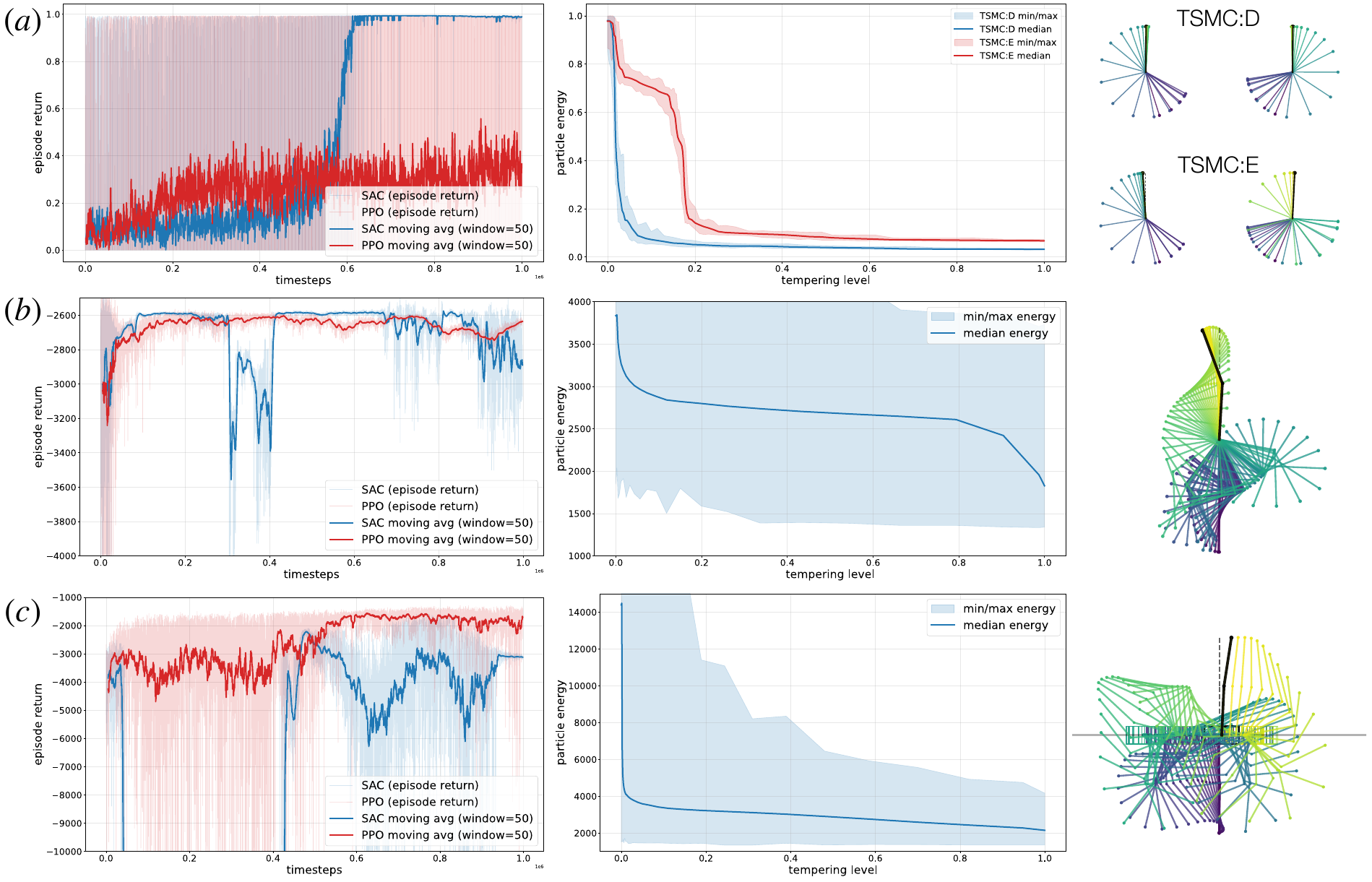}
    \vspace{-8mm}
    \caption{Policy optimization results. Left to right: episode returns of \ppo{} and \sac{} \wrt interaction steps; distribution of particle energies \wrt tempering levels in \tsmc; trajectory rollouts of the policy learned by \tsmc. (a) Inverted pendulum with sparse rewards; (b) Acrobot; (c) Double pendulum on a cart.}
    \vspace{-5mm}
    \label{fig:po-figure}
\end{figure*}

We next turn to policy optimization benchmarks. We initially aimed to evaluate on MuJoCo locomotion tasks (\eg HalfCheetah and Ant), which involve contact dynamics. While both MuJoCo JAX (MJX) \cite{todorov12iros-mujoco} and Brax \cite{freeman21arxiv-brax} provide JAX-based implementations, we found MJX's contact solver relies on a \texttt{while} loop that prevents reverse-mode differentiation, as documented in GitHub issues \cite{mujoco24issue-2259,mujoco23issue-1182}. Brax supports differentiable contact, but for HalfCheetah we observed gradient norms as large as \(10^9\) to \(10^{13}\), which makes it unsuitable for our framework. We therefore benchmark on classical control tasks without contact---where we can already compare against strong RL baselines---and leave contact-rich tasks to future work as differentiable contact solvers in \jax{} mature.

\textbf{Implementation.} We use the same implementation details as in Section~\ref{sec:experiments-to}, except that for the extended-space TSMC variant in Section~\ref{subsec:extended-space-tsmc} we implement adaptive tempering from scratch, since \blackjax's built-in adaptive TSMC routine does not support alternating two block moves over \(\theta\) and \(\mathbf{x}_0\).

\textbf{Policy Architecture.} We parameterize \(\pi_\theta\) as a multi-layer perceptron (MLP) with two hidden layers of 32 units each.

\textbf{Baselines.} We compare against two policy optimization algorithms for RL: proximal policy optimization (\ppo) \cite{schulman17arxiv-ppo} and soft actor-critic (\sac) \cite{haarnoja18icml-sac}. We wrap each dynamical system as a Gymnasium environment \cite{towers25neurips-gymnasium} and run \ppo{} and \sac{} using Stable-Baselines3 \cite{raffin21jmlr-sb3} with default hyperparameters, $10^6$ timesteps, and 32 parallel environments.

\subsection{Inverted Pendulum with Sparse Rewards}
\label{subsec:experiments-po-inverted-pendulum-sparse-rewards}

\textbf{Setup.} Following Section~\ref{subsec:experiments-to-inverted-pendulum}, we use a time step $\Delta t = 0.1$ seconds and planning horizon $T = 32$. To make the task challenging, we use a terminal cost only formulation. Let $(x_{T,\position}, x_{T,\velocity}) \in \Real{4}$ denote the pendulum's position and velocity at time $T$, and let $(x_{g,\position}, x_{g,\velocity})$ be the goal state. Define the goal distance
$\distance = \sqrt{\Vert x_{T,\position} - x_{g,\position}\Vert^2} + 0.5 \sqrt{\Vert x_{T,\velocity} - x_{g,\velocity}\Vert^2}$,
and set the terminal cost to $\ell_T = 1 - \sigma\left( (\varepsilon - \distance)/\epsilon \right)$,
where $\sigma(\cdot)$ is the sigmoid function, $\varepsilon = 0.1$, and $\epsilon = 0.02$ (the sparse reward is $1-\ell_T$). The policy parameters satisfy $\theta \in \Real{1184}$. We set the initial-state distribution to be uniform over the 2D box $[-\pi, \pi] \times [-\pi, \pi]$. For both \tsmcd{} (deterministic approximation) and \tsmce{} (extended-space), we use $B=3000$ and $N=32$.

\textbf{Results.} Fig.~\ref{fig:po-figure}(a) (left) shows learning curves for \ppo{} and \sac{}. \ppo{} does not reliably improve within the allotted budget, whereas \sac{} attains strong returns, consistent with the benefits of off-policy learning and experience replay. Fig.~\ref{fig:po-figure}(a) (middle) shows the distribution of particle energies across tempering levels in \tsmc. Both variants achieve similar performance as \sac{}. Fig.~\ref{fig:po-figure}(a) (right) visualizes representative rollouts. 

% We note that \ppo{} and \sac{} maximize rewards, whereas \tsmc{} minimizes costs.

\subsection{Acrobot}
\label{subsec:experiments-po-acrobot}

\textbf{Setup.} We follow Section~\ref{subsec:experiments-to-acrobot} and use a time step $\Delta t = 0.04$ seconds with planning horizon $T = 100$, which yields $\theta \in \Real{1184}$. We use a single-state initial distribution, \ie $\mu = \delta_{x_0}$, with $x_0$ corresponding to the bottom-right configuration. We focus on a single initial state because Acrobot is substantially more challenging; as we show below, this setting is already difficult for both \ppo{} and \sac{}. Under $\mu=\delta_{x_0}$, the \tsmc{} variants coincide; we use a large particle count $N=16000$ for \tsmc{} (smaller particle sets did not yield a successful policy).

\textbf{Results.} Fig.~\ref{fig:po-figure}(b) (left) shows learning curves for \ppo{} and \sac{}. Neither method reliably reaches a successful swing-up. 
% One possible explanation is that near-optimal behavior is close to bang-bang control; in our differentiable rollout formulation, this can induce large sensitivity terms $G_t$ in \eqref{eq:gradient-dynamics-controller} and hence high-variance gradient estimates. 
Fig.~\ref{fig:po-figure}(b) (middle) shows the distribution of particle energies across tempering levels in \tsmc. 
% The distribution remains relatively spread, reflecting the difficulty of the landscape; nonetheless, 
The final-step best-cost particle attains a trajectory cost of $1338.15$, which improves over the average trajectory cost over the final 50 episodes of $2311.14$ for \ppo{} and $2482.44$ for \sac{}. Fig.~\ref{fig:po-figure}(b) (right) visualizes the resulting swing-up trajectory from \tsmc. To investigate whether the performance gap is due to limited training budget, we run \ppo{} and \sac{} for $10^9$ environment steps. Fig.~\ref{fig:acrobot_ppo_vs_sac_learning_curve_long} shows the learning curves. Neither method reaches a successful swing-up.

\begin{figure}
    \centering
    \includegraphics[width=\linewidth]{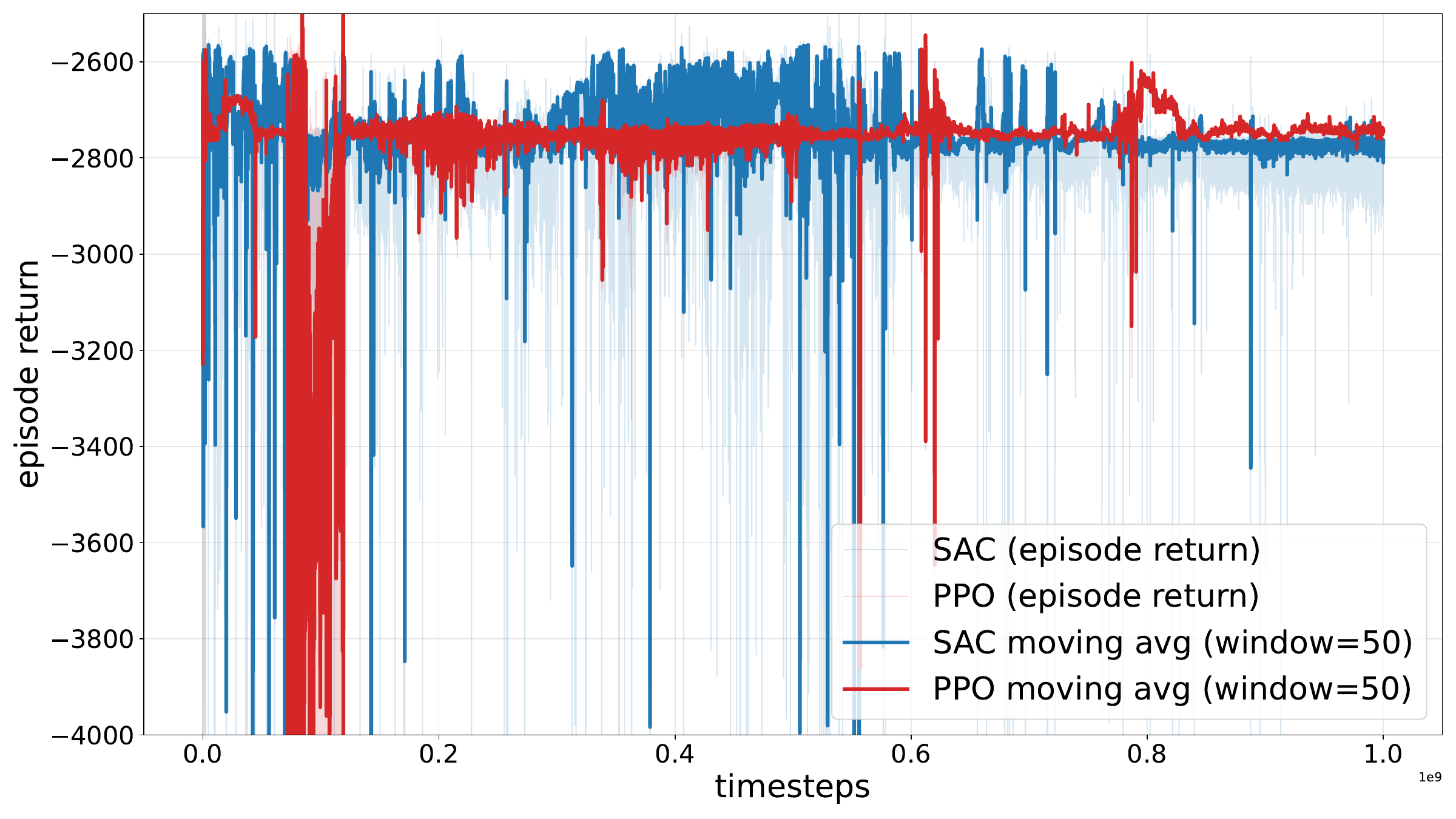}
    \vspace{-8mm}
    \caption{Running \ppo{} and \sac{} on Acrobot for $10^9$ environment steps.
    \label{fig:acrobot_ppo_vs_sac_learning_curve_long}}
    \vspace{-4mm}
\end{figure}

\subsection{Double Pendulum on a Cart}

\textbf{Setup.} We consider swinging up a two-link double pendulum mounted on a cart to the upright configuration \cite{graichen07automatica-swing}. This task is similar to Acrobot but has a higher-dimensional state, $x \in \Real{6}$. We discretize the continuous-time dynamics using RK4 with a time step $\Delta t = 0.06$ seconds and planning horizon $T = 100$, yielding $\theta \in \Real{1248}$. We use a single-state initial distribution, with $x_0$ corresponding to the bottom-right configuration. We use $N=14000$ particles for \tsmc.

\textbf{Results.} Fig.~\ref{fig:po-figure}(c) (left) shows learning curves for \ppo{} and \sac{}. The average trajectory costs over the final 50 episodes are $1713.00$ for \ppo{} and $3121.77$ for \sac{}. Fig.~\ref{fig:po-figure}(c) (middle) shows particle energy distributions across tempering levels in \tsmc; the best cost among the final particles is $1350.50$, improving over both \ppo{} and \sac{}. Fig.~\ref{fig:po-figure}(c) (right) visualizes the resulting swing-up trajectory from \tsmc.

%!TEX root = ../main.tex

\section{Conclusion}
\label{sec:conclusion}

We introduced TSMC, a sampling-based framework for trajectory and policy optimization that exploits differentiable dynamics, and demonstrated its robustness and versatility across TO and PO benchmarks.

\textbf{Limitations.} While TSMC is naturally parallelizable across particles, it can be computationally and memory intensive: differentiating through rollouts can be slow, and maintaining many particles can be demanding on GPU memory, potentially requiring multi-GPU training.

\textbf{Future work.} (i) Integrate TSMC with differentiable contact solvers and evaluate it on training RL policies for contact-rich tasks. (ii) Accelerate TSMC via customized low-level GPU kernels and hardware acceleration.
% (ii) Extend TSMC to online planning with learned dynamics and a pretrained prior policy $p_0$. 

\section*{Acknowledgments}
We thank Sitan Chen for discussions on sampling from the Boltzmann-tilted distribution, and Shucheng Kang and Haoyu Han for helpful conversations throughout the project and for proofreading early drafts. We acknowledge funding from the Office of Naval Research grant N00014-25-1-2322.

\clearpage

%%%%%%%%%%%%%%%%%%%%%%%%%%%
%%% Supplementary
%%%%%%%%%%%%%%%%%%%%%%%%%%%

\onecolumn

\renewcommand{\thesection}{S-\Roman{section}}
\renewcommand{\thesubsection}{\thesection.\Alph{subsection}}
\renewcommand{\thesubsubsection}{\thesubsection.\arabic{subsubsection}}
% Tables: Table S-I, S-II, ...
\renewcommand{\thetable}{S-\Roman{table}}
% Figures: Fig. S1, S2, ...
\renewcommand{\thefigure}{S\arabic{figure}}
% Equations: (S1), (S2), ...
\renewcommand{\theequation}{S\arabic{equation}}
% Theorems/remarks: Theorem S1, Remark S2, ...
\renewcommand{\thetheorem}{S\arabic{theorem}}

\begin{center}
   {\Large \bf Supplementary Material}
\end{center}

\textbf{Outline.} We provide a self-contained proof to Theorem~\ref{thm:optimal-distribution} in Section~\ref{sec:proof-optimal-distribution}. We provide theoretical asymptotic consistency results for TSMC in Section~\ref{sec:theoretical-guarantees}. We give supplementary results and discussion on the trajectory optimization and policy optimization experiments in Section~\ref{sec:supplementary-experiments}. We conclude with related work in Section~\ref{sec:related-work}.

%!TEX root = ../main.tex

\section{Proof of Theorem \ref{thm:optimal-distribution}}
\label{sec:proof-optimal-distribution}
\begin{proof}
Recall the definition of the energy function $E(\theta)=\mathbb{E}_{x_0\sim\mu}[J(\tau(x_0,\theta))]$ as in \eqref{eq:energy-function}. We can write the KL-regularized objective in \eqref{eq:regularized-trajectory-policy-optimization} as
\[
\mathcal{F}(p) = \int_{\Real{d}} p(\theta)E(\theta)\,d\theta + \lambda \int_{\Real{d}} p(\theta)\log\frac{p(\theta)}{p_0(\theta)}\,d\theta.
\]
The feasible set of distributions is given by
\[
\mathcal{P} = \Big\{p\ge 0:\int_{\Real{d}} p(\theta)\,d\theta = 1\Big\}.
\]

\textbf{Strict Convexity.} $\mathcal{F}(p)$ is strictly convex in $p$, due to the first term being linear and the second KL term being strictly convex. The feasible set $\mathcal{P}$ is clearly a convex set. Therefore, if a minimizer exists, it is unique.

\textbf{Optimality Condition.} Due to convexity, we can use the method of Lagrange multipliers to find the minimizer.
Introduce a scalar Lagrange multiplier $\alpha\in\mathbb{R}$ for the constraint $\int p=1$ and consider the Lagrangian functional
\[
\mathcal{L}[p,\alpha]
:= \int \Big(p(\theta)E(\theta)+\lambda p(\theta)\log\frac{p(\theta)}{p_0(\theta)}\Big)\,d\theta
+\alpha\Big(\int p(\theta)\,d\theta-1\Big).
\]
Let $p$ be an interior point of $\mathcal{P}$ (so $p(\theta)>0$ a.e.) and let $\delta p$ be an arbitrary perturbation
with $\int \delta p = 0$ and such that $p+\varepsilon\delta p\in\mathcal{P}$ for small $\varepsilon$.
Compute the first variation:
\[
\left.\frac{d}{d\varepsilon}\right|_{\varepsilon=0}\int (p+\varepsilon\delta p)E\,d\theta
= \int \delta p(\theta)E(\theta)\,d\theta,
\]
and using $\frac{d}{dt}(t\log t)=\log t+1$ for $t>0$,
\[
\left.\frac{d}{d\varepsilon}\right|_{\varepsilon=0}\int (p+\varepsilon\delta p)\log(p+\varepsilon\delta p)\,d\theta
= \int \delta p(\theta)\big(\log p(\theta)+1\big)\,d\theta,
\]
while
\[
\left.\frac{d}{d\varepsilon}\right|_{\varepsilon=0}\int (p+\varepsilon\delta p)\log p_0\,d\theta
= \int \delta p(\theta)\log p_0(\theta)\,d\theta,
\qquad
\left.\frac{d}{d\varepsilon}\right|_{\varepsilon=0}\alpha\int (p+\varepsilon\delta p)\,d\theta
= \alpha\int \delta p\,d\theta.
\]
Therefore
\begin{equation}\label{eq:delta-mathcal-L}
\delta \mathcal{L}
= \int \delta p(\theta)\Big[ E(\theta) + \lambda(\log p(\theta)+1-\log p_0(\theta)) + \alpha\Big]\,d\theta.
\end{equation}
A stationary point must satisfy $\delta\mathcal{L}=0$ for all admissible $\delta p$,
which forces the bracketed term in \eqref{eq:delta-mathcal-L} to vanish a.e.:
\begin{equation}\label{eq:stationary}
E(\theta) + \lambda\big(\log p(\theta)+1-\log p_0(\theta)\big) + \alpha = 0 \quad\text{a.e.}
\end{equation}
Rearranging \eqref{eq:stationary} gives
\[
\log p(\theta) = \log p_0(\theta) - \frac{1}{\lambda}E(\theta) - 1 - \frac{\alpha}{\lambda}.
\]
Exponentiating yields
\[
p(\theta) = C\, p_0(\theta)\exp\!\Big(-\tfrac{1}{\lambda}E(\theta)\Big),
\qquad
C:=\exp\!\Big(-1-\tfrac{\alpha}{\lambda}\Big)>0.
\]
Imposing $\int p=1$ gives
\[
1 = C\int p_0(\theta)\exp\!\Big(-\tfrac{1}{\lambda}E(\theta)\Big)\,d\theta = C Z,
\]
so $C=1/Z$ and the candidate optimizer is exactly \eqref{eq:optimal-distribution}.

\textbf{Well-definedness under a Lower Bound on $J$.}
Assume $J(\cdot)\ge - M$ is lower bounded for some constant $M > 0$. It follows that $E(\theta)\ge - M$ for all $\theta$. 
Then for all $\theta$,
\[
\exp\!\Big(-\tfrac{1}{\lambda}E(\theta)\Big) \;\le\; \exp\!\Big(\tfrac{M}{\lambda}\Big),
\]
hence
\[
Z = \int p_0(\theta)\exp\!\Big(-\tfrac{1}{\lambda}E(\theta)\Big)\,d\theta
\;\le\; e^{M/\lambda}\int p_0(\theta)\,d\theta
= e^{M/\lambda} < \infty.
\]
Moreover, since the integrand is nonnegative and $p_0>0$ a.e., we have $Z>0$.
Thus $p^\star$ in \eqref{eq:optimal-distribution} is a well-defined probability density. Note that in the main Theorem, we assumed that $J(\cdot)\ge 0$. Here, we have shown that it can be relaxed to $J(\cdot)\ge - M$ for some constant $M > 0$.

In summary, we have shown that the optimal distribution $p^\star$ is given by \eqref{eq:optimal-distribution} and is a well-defined probability density. Because $\mathcal{F}$ is strictly convex on the convex feasible set $\mathcal{P}$,
any feasible stationary point is the unique global minimizer (a.e.).
Since $p^\star$ is feasible and satisfies the first-order condition derived above, it is the unique minimizer.
\end{proof}
    
%!TEX root = ../main.tex

\section{Theoretical Guarantees}
\label{sec:theoretical-guarantees}

In Section \ref{sec:tsmc}, we introduced the tempered sequential Monte Carlo (TSMC) algorithm for sampling from the Boltzmann-tilted distribution \eqref{eq:optimal-distribution}. The interested reader may be curious about the theoretical guarantees of TSMC. In this section, we provide a standard consistency result for TSMC.

\begin{theorem}[Consistency of TSMC (law of large numbers)]
    \label{thm:tsmc_consistency}
    Let $p_0$ be a probability density on $\Real{d}$.
    Fix a tempering schedule $0=\beta_0<\beta_1<\cdots<\beta_K=1$ and a measurable energy $E:\Real{d}\to\mathbb{R}$.
    For each $k=0,1,\dots,K$, define the intermediate target
    \begin{equation}
    p_k(\theta)\;:=\;\frac{1}{Z_k}\,p_0(\theta)\exp\!\Big(-\frac{\beta_k}{\lambda}E(\theta)\Big),
    \qquad
    Z_k:=\int_{\Real{d}} p_0(\theta)\exp\!\Big(-\frac{\beta_k}{\lambda}E(\theta)\Big)\,d\theta,
    \end{equation}
    and assume $Z_k\in(0,\infty)$ for all $k$ (\eg it suffices that $E$ is lower bounded).
    
    Consider the TSMC algorithm producing particles $\{\theta_k (i)\}_{i=1}^N$ at each level $k$ via:
    (i) \emph{reweighting} by the incremental weight function
    \begin{equation}
    G_k(\theta)\;:=\;\frac{p_k(\theta)}{p_{k-1}(\theta)}
    \;\propto\;
    \exp\!\Big(-\frac{\beta_k-\beta_{k-1}}{\lambda}E(\theta)\Big),
    \qquad k=1,\dots,K,
    \end{equation}
    (ii) \emph{resampling} using an \emph{unbiased} resampling scheme (\eg using typical resampling schemes such as systematic resampling offered by \blackjax),
    and (iii) an optional \emph{move} step using any Markov kernel $\calT_k(\cdot\,|\,\cdot)$
    that leaves $p_k$ invariant.
    No mixing/ergodicity assumption on $\calT_k$ is required (the identity kernel is allowed).
    
    Assume furthermore that for each $k=1,\dots,K$ there exists $q>1$ such that
    \begin{equation}\label{eq:moment-assumption}
    \mathbb{E}_{\theta\sim p_{k-1}}\!\left[G_k(\theta)^q\right] < \infty.
    \end{equation}
    Let the empirical measure at level $k$ be
    \[
    \eta_k^N \;:=\; \frac{1}{N}\sum_{i=1}^N \delta_{\theta_k (i) }.
    \]
    Then for every bounded measurable test function $\varphi:\Real{d}\to\mathbb{R}$ and every $k=0,1,\dots,K$,
    \begin{equation}
    \eta_k^N(\varphi)\;:=\;\frac{1}{N}\sum_{i=1}^N \varphi(\theta_k (i) )
    \;\xrightarrow[N\to\infty]{\mathbb{P}}\;
    p_k(\varphi)\;:=\;\int_{\Real{d}}\varphi(\theta)\,p_k(\theta)\,d\theta.
    \end{equation}
    In particular, $\eta_K^N$ converges weakly in probability to $p_K$.
    \end{theorem}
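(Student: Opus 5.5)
The plan is an induction on the level $k$. The inductive hypothesis is that for every bounded measurable $\varphi$, $\eta_k^N(\varphi)\to p_k(\varphi)$ in probability.

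\textbf{Base case and decomposition.} At $k=0$ the particles are i.i.d.\ from $p_0$, so the weak law of large numbers gives the claim. For the inductive step I split level $k$ into three operations. First, the reweighted (but not yet resampled) measure
\[
\hat\eta_k^N(\varphi):=\frac{\eta_{k-1}^N(G_k\varphi)}{\eta_{k-1}^N(G_k)}.
\]
Second, the resampled measure $\bar\eta_k^N$. Third, the moved measure $\eta_k^N$. I will show that each operation preserves convergence in probability to the correct limit. Only ratios of $G_k$ enter the algorithm, so the unknown normalizing constants are irrelevant. I also use the identity $p_{k-1}(G_k\varphi)=p_k(\varphi)$ with $G_k=p_k/p_{k-1}$, and in particular $p_{k-1}(G_k)=1$.

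\textbf{Reweighting step (the main obstacle).} The difficulty is that $G_k$ may be unbounded, while the hypothesis only covers bounded test functions. Strictly, if $E$ is not lower bounded then $G_k$ is unbounded, though when $E\ge -M$ it is bounded and the step is trivial. I handle the general case by truncation, using the moment assumption \eqref{eq:moment-assumption}.

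For $C>0$, write $G_k\varphi=(G_k\wedge C)\varphi+(G_k-G_k\wedge C)\varphi$. The first term is bounded, so the inductive hypothesis handles it. For the remainder I need $\eta_{k-1}^N\big((G_k-C)_+\big)$ to be small in probability uniformly in $N$ as $C\to\infty$. To get this, I strengthen the induction to also carry a uniform moment bound: $\sup_N \mathbb{E}[\eta_{k-1}^N(h)]\le c\,p_{k-1}(h)$ for nonnegative $h$. This holds exactly in expectation for i.i.d.\ initialization, and it propagates through unbiased resampling and $p$-invariant moves, where the expectation of the empirical measure is a biased-but-controlled estimate.

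If the strengthening becomes delicate, I fall back on the standard Feynman--Kac argument of Del Moral (2004) and Chopin--Papaspiliopoulos for unbounded potentials with $L^q$ integrability. There, Hölder/Markov together with $\mathbb{E}_{p_{k-1}}[G_k^q]<\infty$ gives $p_{k-1}((G_k-C)_+)\le C^{1-q}p_{k-1}(G_k^q)\to0$.

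Having the numerator and denominator converge in probability, and the denominator limit equal to $1>0$, the continuous mapping theorem yields $\hat\eta_k^N(\varphi)\to p_k(\varphi)$.

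\textbf{Resampling step.} Conditionally on the previous particles, unbiasedness gives $\mathbb{E}[\bar\eta_k^N(\varphi)\mid\mathcal{F}]=\hat\eta_k^N(\varphi)$. For multinomial resampling, the conditional variance is at most $\|\varphi\|_\infty^2/N$. For systematic or residual schemes I assume the standard bound of order $\|\varphi\|_\infty^2/N$, which is what I take ``unbiased resampling scheme'' to include. Conditional Chebyshev then gives $\bar\eta_k^N(\varphi)-\hat\eta_k^N(\varphi)\to0$ in probability.

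\textbf{Move step.} Write $\mathcal{T}_k\varphi(\theta):=\int\varphi(\theta')\,\mathcal{T}_k(\theta'\mid\theta)\,d\theta'$, which is bounded by $\|\varphi\|_\infty$. Conditionally on the resampled particles, the moved particles are independent with $\mathbb{E}[\eta_k^N(\varphi)\mid\cdot]=\bar\eta_k^N(\mathcal{T}_k\varphi)$. Their conditional variance is at most $\|\varphi\|_\infty^2/N$, so again $\eta_k^N(\varphi)-\bar\eta_k^N(\mathcal{T}_k\varphi)\to0$. By the previous step applied to the bounded function $\mathcal{T}_k\varphi$, $\bar\eta_k^N(\mathcal{T}_k\varphi)\to p_k(\mathcal{T}_k\varphi)=p_k(\varphi)$, using invariance \eqref{eq:mcmc-invariance}. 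No mixing is needed, and the identity kernel is allowed.

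This closes the induction. The final assertion, weak convergence of $\eta_K^N$ in probability, follows by taking $\varphi$ in the class of bounded continuous functions, or a countable convergence-determining subclass.
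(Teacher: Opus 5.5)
Your induction is complete when $E$ is bounded below. In that case every $G_k$ is bounded and only bounded test functions ever appear, which covers the paper's nonnegative costs. But then the moment assumption \eqref{eq:moment-assumption} is vacuous, and the theorem is stated precisely to allow unbounded $G_k$. There the reweighting step has a genuine gap. Convergence of $\eta_{k-1}^N(\varphi)$ for bounded $\varphi$ says nothing about $\eta_{k-1}^N\big((G_k-C)_+\big)$: a single particle where $G_k$ is huge can dominate both numerator and denominator.

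Your proposed repair, $\sup_N\mathbb{E}[\eta_{k-1}^N(h)]\le c\,p_{k-1}(h)$ for all $h\ge 0$, is not merely delicate; it is false. It does survive unbiased resampling and $p_k$-invariant moves, but not the self-normalized reweighting:
\begin{itemize}
\item For $k=1$, $N=1$ and the identity move, $\mathbb{E}[\eta_1^1(h)]=p_0(h)$. The bound would then force $G_1=p_1/p_0\ge 1/c$, i.e.\ $E$ bounded above, which fails for a quadratic cost under a Gaussian prior.
\item For general $N$, the density of $\mathbb{E}[\hat\eta_1^N]$ with respect to $p_1$ is $\theta\mapsto\mathbb{E}\big[N/(G_1(\theta)+\sum_{j\ge 2}G_1(\theta_j))\big]$, which need not be bounded.
\item For $k\ge 2$ the particles are dependent, so nothing factorizes.
\end{itemize}
The special case you actually need, $h=(G_k-C)_+$, is left unproved. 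Your fallback inequality $p_{k-1}((G_k-C)_+)\le C^{1-q}p_{k-1}(G_k^q)$ bounds the limit measure, not the empirical one, so it does not transfer to $\eta_{k-1}^N$. What remains is a citation to the Feynman--Kac literature. To be fair, that is all the paper itself provides: it defers to Del Moral, to Del Moral--Doucet--Jasra, and to Beskos et al.\ for adaptive schedules, with no argument.

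The missing idea is to enlarge the class of test functions rather than carry a moment bound. Induct on the claim that $\eta_k^N(\varphi)\to p_k(\varphi)$ in probability for \emph{every} $\varphi\in L^1(p_k)$.
\begin{itemize}
\item \textbf{Base case.} This is Khinchin's weak law.
\item \textbf{Reweighting.} This becomes immediate: $p_{k-1}(G_k|\varphi|)=p_k(|\varphi|)<\infty$, so both $G_k\varphi$ and $G_k$ are admissible test functions at level $k-1$.
\item \textbf{Resampling and moving.} Split $\varphi=\varphi\mathbf{1}_{\{|\varphi|\le C\}}+\psi_C$. The bounded part goes through your conditional Chebyshev bounds. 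The tail goes through conditional Markov, using only unbiasedness, $\mathbb{E}[\bar\eta_k^N(|\psi_C|)\mid\mathcal{F}]=\hat\eta_k^N(|\psi_C|)\to p_k(|\psi_C|)$. For the move, use $\mathbb{E}[\eta_k^N(|\psi_C|)\mid\bar{\mathcal{F}}]=\bar\eta_k^N(\mathcal{T}_k|\psi_C|)\to p_k(|\psi_C|)$; this holds because $\mathcal{T}_k$ maps $L^1(p_k)$ into itself with $p_k(\mathcal{T}_k|\psi_C|)=p_k(|\psi_C|)$ by invariance. Then let $C\to\infty$ by dominated convergence.
\end{itemize}
This closes the induction using only $Z_k\in(0,\infty)$; the $q>1$ condition is not needed for the law of large numbers.

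Finally, keep your explicit assumption on the resampling scheme. Unbiasedness alone does not give an $O(\|\varphi\|_\infty^2/N)$ conditional variance. Systematic resampling of unordered particles is known to be able to fail to converge (Gerber, Chopin and Whiteley). So the statement's parenthetical about systematic resampling is broader than what the argument supports.
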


    \begin{remark}[Sufficient Condition for the Moment Assumption]
    If $E$ is bounded below, \ie $E(\theta)\ge -M$ for some $M > 0$, then $G_k(\theta)\le \exp((\beta_k-\beta_{k-1})M/\lambda)$,
    so $\mathbb{E}_{p_{k-1}}[G_k(\theta)^q]<\infty$ for all $q>0$ in \eqref{eq:moment-assumption}.
    \end{remark}

    This is a standard law of large numbers for SMC samplers / Feynman--Kac particle systems.
    See, \eg \cite{moral04book-feynman},
    or \cite{delmoral06jrsc-smc}, which specializes the Feynman--Kac theory to static targets with MCMC move steps.
    The above theorem is stated for a predefined tempering schedule. Analogous convergence results also hold under adaptive tempering schemes (\eg ESS-based); see \cite{beskos16aap-adaptive-smc-convergence}.

    While Theorem~\ref{thm:tsmc_consistency} provides a consistency guarantee for TSMC, it is asymptotic in the number of particles and may therefore be of limited practical value at finite $N$. Beyond the choice of $N$, practical performance depends critically on the MCMC kernel, in particular its mixing and its ability to maintain particle diversity. In practice, we monitor diagnostics such as the effective sample size (ESS) and the MCMC acceptance rate; and we tune the parameters of the MCMC kernel to achieve good performance.

%!TEX root = ../main.tex

\section{Supplementary Results and Discussion}
\label{sec:supplementary-experiments}

\subsection{Hyperparameters of TSMC}
\label{subsec:hyperparameters-of-tsmc}

Table~\ref{tab:hyperparameters} summarizes three key hyperparameters used by \tsmc{} in our trajectory optimization (TO) and policy optimization (PO) experiments. Hyperparameters not shown in the table can be found in the code repository. We discuss tuning considerations below.

\begin{itemize}
    \item Temperature ($\lambda$): The temperature controls how sharply the tempered distribution concentrates. Higher temperature produces a more diffuse distribution, while lower temperature concentrates mass more strongly near global minimizers. Two factors primarily determine the choice of $\lambda$:
    \begin{enumerate}
        \item \emph{Scale of the energy.} When energies are on the order of hundreds to thousands (\eg Pendulum, Acrobot, and pusher--slider in TO experiments), a temperature of $0.1$ is usually sufficient. For inverted pendulum with sparse rewards in the PO experiments, the terminal reward is normalized to $[0,1]$, so we use a smaller temperature to obtain sufficient concentration.
        
        \item \emph{Landscape sharpness.} For Acrobot and double pendulum on a cart in the PO experiments, we empirically found that larger temperatures work better: at low temperature, \tsmc{} often gets trapped in poor local minima, suggesting a sharp landscape that benefits from additional smoothing.
    \end{enumerate}
    \item NUTS step size ($\varepsilon$): A larger step size tends to increase exploration and particle movement, improving diversity. However, too large a step size can cause numerical instability in leapfrog integration and lead to near-zero Metropolis--Hastings acceptance. We tune $\varepsilon$ by monitoring the MCMC acceptance rate, and note that feasible step sizes depend on the smoothness/sharpness of the energy landscape. In our TO experiments we can use larger step sizes, whereas in PO we use smaller step sizes to maintain stable acceptance.
    \item Number of particles ($N$): Increasing $N$ improves Monte Carlo accuracy (\cf Theorem~\ref{thm:tsmc_consistency}) at the cost of additional computation. In \jax{}, particle operations are heavily vectorized; in practice, we can often increase $N$ up to GPU memory limits without a proportional slowdown. For TO, we intentionally used $N=100$ to demonstrate that \tsmc{} remains effective with a relatively small particle set. For PO, we used larger particle counts to ensure the algorithm can sample from the entire distribution.
\end{itemize}

Overall, PO tends to be ``harder'' than TO in our experiments: it typically requires higher temperatures, smaller NUTS step sizes, and larger particle counts. Section~\ref{subsec:discussion-to-po} further discusses the TO--PO distinction. 

% For more details on the choice of hyperparameters, please refer to Table~\ref{tab:hyperparameters}.

\begin{table}[h]
    \centering
    \caption{Three key hyperparameters used by \tsmc{} for trajectory optimization and policy optimization experiments.
    \label{tab:hyperparameters}}
    \vspace{-2mm}
    \begin{adjustbox}{max width=\textwidth}
    \begin{tabular}{c|c|c|c|c|c|c}
        \toprule
        & \multicolumn{3}{c|}{Trajectory Optimization} & \multicolumn{3}{c}{Policy Optimization} \\
        \hline
        Parameters & Inverted Pendulum & Acrobot & Pusher Slider & Inverted Pendulum (w/ Sparse Rewards) & Acrobot & Double Pendulum on a Cart \\
        \hline
        Temperature ($\lambda$) & $0.1$ & $0.1$ & $0.2$ & $0.0005$ & $100$ & $200$ \\
        \hline
        NUTS Step Size ($\varepsilon$) & $0.2$ & $0.02$ & $0.01$ & $0.001$ & $0.001$ & $0.001$ \\
        \hline
        Number of Particles ($N$) & $100$ & $100$ & $100$ & $3000$ & $16000$ & $14000$ \\
        \bottomrule
    \end{tabular}
    \end{adjustbox}
\end{table}

\subsection{Runtime of TSMC and Baselines}

Table~\ref{tab:runtime-to-experiments} reports the runtime of \tsmc{} and the baselines for trajectory optimization. \pmppi{} is the fastest method since it does not differentiate through rollouts. \tsmc{} is slower than \pnuts{} and \pmala{} due to the tempering procedure. \pipopt{} is reasonably fast on the low-dimensional inverted pendulum problem, but becomes prohibitively slow on pusher--slider due to high dimensionality and nonsmooth dynamics. Overall, \tsmc{} trades additional compute for lower-cost solutions, consistent with Fig.~\ref{fig:to-figure} in the main paper.

Table~\ref{tab:runtime-po-experiments} reports runtime for policy optimization. \tsmc{} is generally comparable to the baselines, though in many cases it is about \(3\times\) slower. Nevertheless, Fig.~\ref{fig:po-figure} in the main paper shows that \tsmc{} attains better policies. Finally, compared with \ppo{} and \sac{}, \tsmc{} does not learn a value function, which simplifies the overall pipeline.

In summary, \tsmc{} can be slower than standard baselines due to its algorithmic complexity, but this overhead can be worthwhile given its robust performance.

\begin{table}[h]
    \centering
    \caption{Runtime of \tsmc{} and baselines (in seconds) for trajectory optimization experiments.
    \label{tab:runtime-to-experiments}}
    \vspace{-2mm}
    \begin{adjustbox}{max width=\textwidth}
    \begin{tabular}{c|c|c|c}
        \toprule
        Methods & Inverted Pendulum & Acrobot & Pusher Slider \\
        \hline
        \tsmc & $140.64$ & $1627.47$ & $2033.36$  \\
        \hline
        \pnuts & $24.69$ & $109.15$ & $1248.52$ \\
        \hline
        \pmala & $12.44$ & $543.72$ & $526.79$  \\
        \hline
        \pmppi & $1.86$ & $5.09$ & $12.10$ \\
        \hline
        \pipopt & $15.97$ & $1998.14$ & $55391.26$ \\
        \bottomrule
    \end{tabular}
    \end{adjustbox}
\end{table}

\begin{table}[h]
    \centering
    \caption{Runtime of \tsmc{} and baselines (in seconds) for policy optimization experiments.
    \label{tab:runtime-po-experiments}}
    \vspace{-2mm}
    \begin{adjustbox}{max width=\textwidth}
    \begin{tabular}{c|c|c|c|c|c|c}
        \toprule
        Methods & Inverted Pendulum (w/ Sparse Rewards) & Acrobot & Double Pendulum on a Cart \\
        \hline
        \tsmc & \tsmcd: $1514.19 \quad$ \tsmce: $2035.97$ & $554.55$ & $1080.07$ \\
        \hline
        \ppo & $414.45$ & $719.26$ & $303.83$ \\
        \hline
        \sac & $304.56$ & $347.61$ & $224.02$ \\
        \bottomrule
    \end{tabular}
    \end{adjustbox}
\end{table}

\begin{figure}[h!]
    \centering
    \includegraphics[width=\textwidth]{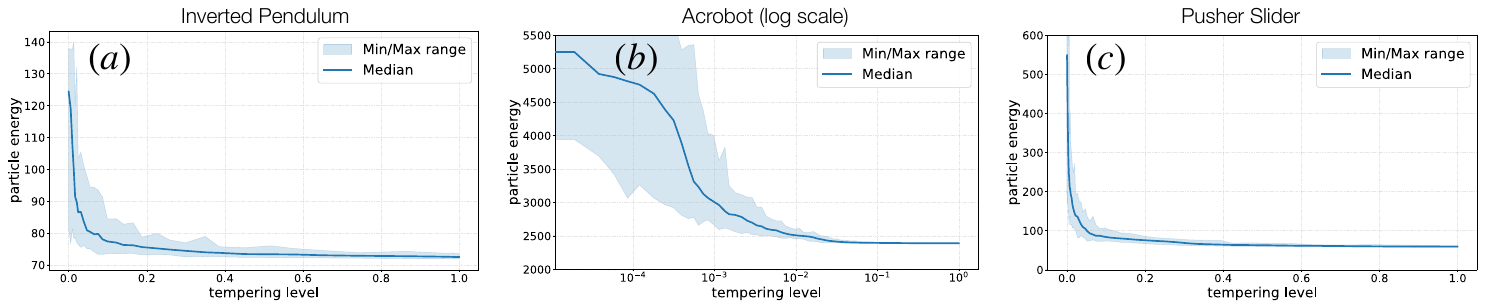}
    \vspace{-7mm}
    \caption{Energy distributions across tempering steps of \tsmc{} for the trajectory optimization experiments. (a) inverted pendulum, (b) acrobot, (c) pusher slider. Note that the $x$-axis for the acrobot results is in log scale for better visualization.
    \label{fig:energy-vs-tempering}}
    \vspace{-4mm}
\end{figure}

\subsection{Energy Distributions across Tempering Steps}

Fig.~\ref{fig:po-figure} in the main paper reports energy distributions across tempering steps for the policy optimization experiments. Fig.~\ref{fig:energy-vs-tempering} shows the analogous plots for trajectory optimization. As tempering progresses, the distributions concentrate and shift toward lower energies, indicating convergence to good solutions of the original problems.

\subsection{Discussion: Trajectory Optimization vs. Policy Optimization}
\label{subsec:discussion-to-po}

The interested reader may have noticed that, in the policy optimization experiments for Acrobot and double pendulum on a cart, we use a single-point initial-state distribution. This matches the trajectory optimization setup, except that we now optimize a parameter vector $\theta$ (the weights of a neural network policy) rather than an open-loop control sequence. As noted in Section~\ref{subsec:hyperparameters-of-tsmc}, this seemingly minor change makes PO empirically harder than TO. What is the explanation?

The key lies in Theorem~\ref{thm:gradient-computation-trajectory-optimization} in the main paper. From \eqref{eq:grad-J}, the gradient of the trajectory cost with respect to $\theta$ depends on two terms, $G_t$ and $g_t$. These terms behave very differently in the TO and PO settings.

\begin{itemize}
    \item $G_t$: Recall the definition of $G_t$ in \eqref{eq:gradient-dynamics-controller}. For TO, $G_t$ simply selects the $t$-th control block, \ie $G_t=[0,\ldots,I_m,\ldots,0]\in\Real{m\times Tm}$. For PO, by contrast, $G_t$ captures the sensitivity of the control prediction with respect to the policy parameters $\theta$. While $G_t$ is well conditioned in TO, it can become ill conditioned in PO if the policy has a large Lipschitz constant (\ie a small perturbation in $\theta$ induces a large change in $u_t$).
    \item $g_t$: Recall the definition of $g_t$ in \eqref{eq:adjoint-recursion}. The TO and PO recursions differ only through the term $L_t$ in \eqref{eq:gradient-dynamics-controller}. In TO, $L_t=0$ because the control is not a function of the state. In PO, $L_t$ measures the sensitivity of the control prediction with respect to the state $x_t$. This sensitivity can be extremely large when the optimal policy is nonsmooth (\eg bang-bang), as in inverted pendulum and Acrobot, where a small change in state can flip the control from $-u_\max$ to $+u_\max$; see, \eg \cite{han24l4dc-pendulum}.
\end{itemize}

In summary, both $G_t$ and $g_t$ can attain large magnitudes in PO, whereas they are well conditioned in TO. This can amplify $\nabla_\theta J$ in PO and lead to a sharper energy landscape than in TO. For this reason, we used higher temperatures in PO to smooth out the energy landscape (\cf Table~\ref{tab:hyperparameters}).

This analysis also sheds light on why \ppo{} and \sac{} perform poorly on Acrobot and double pendulum on a cart. A recent result \cite{han26icml-nsr} relates the variance of policy-gradient estimators to terms like $g_t$: when $g_t$ has large magnitude, gradient estimates can have high variance. In these tasks, $g_t$ can indeed be large due to the nonsmoothness of near-optimal policies, which can make learning unstable within a finite interaction budget.

%!TEX root = ../main.tex

\section{Related Work}
\label{sec:related-work}

We briefly review related work on trajectory optimization, policy optimization, sampling from the Boltzmann-tilted distribution, and annealing and tempering.

\subsection{Trajectory Optimization}

\textbf{Optimization-based Methods.} Nonlinear programming (NLP) methods \cite{nocedal06book-numerical}---in either single-shooting or multiple-shooting form---are among the most widely used approaches to trajectory optimization in control and robotics. Notable examples include interior-point methods \cite{wachter06mp-ipopt}, sequential quadratic programming \cite{gill05siam-snopt}, sequential convex programming \cite{mao16cdc-successive,li25rss-crisp}, and augmented Lagrangian methods \cite{howell19iros-altro,jallet22arxiv-proxnlp}. These are local methods that refine an initial guess and can get trapped in undesirable local minima. In contrast, global solvers avoid dependence on a good initialization but are typically far more computationally expensive; common approaches include mixed-integer programming (branch-and-bound/cut) \cite{deits15icra-efficient,marcucci24siopt-shortest} and convex relaxations \cite{kang24wafr-strom,kang25rss-spot,han25rss-xm,Yang24book-sdp}. Most optimization-based methods assume that the dynamics are analytically specified or that accurate gradients are available.

\textbf{Sampling-based Methods.} Sampling-based methods draw many candidate control sequences from a proposal distribution and then aggregate them to minimize trajectory cost. The simplest strategy is to select the best sample (predictive sampling) \cite{howell22arxiv-predictive,qi25arxiv-gpc}. Model predictive path integral control (MPPI) \cite{williams17jgcd-mppi,williams15arxiv-mppi} instead uses an exponential weighting of trajectory costs, which can be viewed as importance sampling for the Boltzmann-tilted distribution. The cross-entropy method (CEM) \cite{de05aor-cem} iteratively refines the proposal by fitting it to elite samples. More recently, \cite{pan24neurips-model} learns a score function from rollouts to enable model-based sampling. These methods are naturally parallelizable and can be applied with black-box dynamics, but often require many samples to achieve strong performance.

TSMC differs from both strands by combining global exploration from sampling-based methods with gradient-based local refinement from optimization-based methods.

\subsection{Policy Optimization}

\textbf{Zero-order Methods.} At the heart of modern policy optimization in reinforcement learning lies the policy gradient lemma \cite{sutton98book-rlbook,sutton99neurips-policygradient}, which enables gradient-based learning of policy parameters directly from sampled trajectories without requiring an explicit model of the environment. Despite their generality, policy-gradient methods can be unstable and converge to poor solutions. Prior work has identified multiple sources of this instability. One is the \emph{high variance} of stochastic policy-gradient estimators, which can cause updates to poorly track the true ascent direction; this motivates variance-reduction techniques such as baselines, generalized advantage estimation, and actor--critic methods~\cite{sutton99neurips-policygradient,greensmith04jmlr-variancereduction,schulman15iclr-gae,haarnoja18icml-sac}. 
Another is overly aggressive policy updates, motivating methods that explicitly constrain update size, \eg via Kullback--Leibler trust regions in TRPO~\cite{schulman15icml-trpo} or clipping in PPO~\cite{schulman17arxiv-ppo}. In contrast to optimizing a single parameter vector $\theta$, evolution strategies~\cite{salimans17arxiv-es} optimize a distribution over policy parameters, often approximated by a Gaussian whose mean is updated.

\textbf{First-order Methods.} In contrast to zero-order methods that estimate policy gradients from sampled trajectories, first-order methods exploit differentiability of the dynamics/simulator to compute gradients more efficiently. Notable examples include \cite{clavera20iclr-model,xu22iclr-accelerated,amigo25corl-forl,amos21l4dc-model,wiedemann23icra-apg}. While these methods are receiving increasing attention in robotics, a key challenge remains contact-rich tasks, where computing or approximating useful dynamics gradients is nontrivial.

TSMC is related to both lines of work. Like evolution strategies, it optimizes a distribution over parameters rather than a single $\theta$. Like first-order methods, it leverages differentiable dynamics within its MCMC rejuvenation steps. The key difference is that TSMC adopts principled machinery from Bayesian computation---sampling from a Boltzmann-tilted distribution via tempering---to better handle sharp, multimodal targets. Moreover, TSMC provides a unified framework for both trajectory optimization and policy optimization.

\subsection{Sampling from the Boltzmann-tilted Distribution}

Sampling from a Boltzmann tilt has a long history in statistical physics (Gibbs measures) and enters control via the ``control-as-inference'' viewpoint, where optimal control is posed as inference in an exponential-family model and the solution takes the form of exponentiated-cost reweighting of a prior over actions, trajectories, or policy parameters. Beyond importance sampling, Markov chain Monte Carlo (MCMC) and its tempered/sequential variants \cite{neal01sc-ais,doucet01book-smc,delmoral06jrsc-smc,dai22jasa-smc} are widely used for such targets because they help maintain diversity and mitigate particle degeneracy. We use \blackjax{} \cite{cabezas24arxiv-blackjax}, a \jax{} library that implements these methods.

In parallel, there is growing interest in \emph{neural samplers} (a.k.a.\ learned samplers, amortized MCMC, or neural transport) that replace some or all of the sampling machinery with trainable maps. Representative examples include normalizing flows \cite{albergo19prd-flow}, denoising diffusion samplers \cite{vargas23arxiv-denoising},  path integral sampler \cite{zhang21arxiv-path}, iterated denoising energy matching \cite{akhound24arxiv-iterated}, and adjoint sampling \cite{havens25arxiv-adjoint}.
In this work, we apply classical TSMC to trajectory and policy optimization. A natural future direction is to investigate whether neural samplers can outperform classical TSMC in these settings.

\subsection{Annealing and Tempering}

Annealing and tempering address the challenge of sharp and/or multimodal objectives by introducing a temperature schedule that gradually deforms an easy distribution into a target distribution. In nonconvex optimization, this viewpoint underlies simulated annealing \cite{kirkpatrick83science-simulated} and deterministic annealing \cite{rose02ieee-deterministic}, which start from smooth, high-temperature (or entropy-regularized) objectives and progressively sharpen them, often improving robustness to poor local minima. In robotics and computer vision, graduated non-convexity \cite{yang20ral-gnc} similarly solves a sequence of problems that deform from a convex surrogate to a nonconvex, outlier-robust objective. In sampling, tempering improves mixing across modes: parallel tempering (replica exchange) runs chains at multiple temperatures and swaps states \cite{swendsen86prl-replica}, while tempered transitions build global proposals by traversing a temperature ladder \cite{neal96sc-sampling}. For population-based estimators, annealed importance sampling bridges from a tractable base distribution to a target to reduce variance and estimate normalizing constants \cite{neal01sc-ais}, and sequential Monte Carlo samplers strengthen this idea via resampling and MCMC move steps in the canonical reweight--resample--move template \cite{delmoral06jrsc-smc}.

\clearpage
\pagestyle{empty}
% \nocite{*} % remove once you have real \cite{...} calls
\printbibliography

\end{document}